\newcommand{\mtm}{\emph{multi-to-multi}\xspace}
\newcommand{\mts}{\emph{multi-to-single}\xspace}
\newcommand{\sts}{\emph{multi-to-single-restricted}\xspace}
\newcommand{\dtd}{\emph{single-to-single}\xspace}
\newcommand{\cte}{\emph{full-to-edge}\xspace}
\newcommand{\ctc}{\emph{full-to-full}\xspace}
\newcommand{\ete}{\emph{edge-to-edge}\xspace}
\newcommand{\AND}{{\sc and}\xspace}
\newcommand{\OR}{{\sc or}\xspace}
\newcommand{\ignore}[1]{}
\def\C{\mathcal{C}}
\def\F{\mathcal{F}}
\def\W{\mathcal{W}}
\def\Naturals{\mathbb{N}}
\renewcommand{\leq}{\leqslant}
\renewcommand{\geq}{\geqslant}
\newcommand{\Cpp}{C\raise.08ex\hbox{\tt ++}\xspace}
\newcommand{\mc}{multi-configuration\xspace}
\newcommand{\mcs}{multi-configurations\xspace}
\newcommand{\pspace}{{\sc pspace}\xspace}
\newcommand{\np}{{\sc np}\xspace}
\newtheorem{defin}{Definition}
  \newenvironment{definition}{\begin{defin} \sl}{\end{defin}}
\newtheorem{theo}[defin]{Theorem}
  \newenvironment{thm}{\begin{theo} \sl}{\end{theo}}
\newtheorem{lemma}[defin]{Lemma}
\newtheorem{propo}[defin]{Proposition}
\newtheorem{coro}[defin]{Corollary}
\newtheorem{obse}[defin]{Observation}
\def\mypart#1#2{%
  \par\break 
  \vskip .7\vsize 
  \refstepcounter{part}
  {\centering\Huge Part \thepart.\par \\
   \centering #1
  }%
  \vskip .1\vsize 

  #2
  \vfill\break 
}
\def\mypart#1#2
  \def\marrow{{\raggedright\footnotesize $\longleftarrow$}}
  \def\kiril#1{\textcolor{blue}{{\sc Kiril says: }{\marrow\sf #1}}}
  \def\aviv#1{\textcolor{red}{{\sc Aviv says: }{\marrow\sf #1}}}
  \def\markdb#1{\textcolor{red}{{\sc Mark says: }{\marrow\sf #1}}}
  \def\danny#1{\textcolor{red}{{\sc Danny says: }{\marrow\sf #1}}}
  \def\kiril#1{}
  \def\aviv#1{}
  \def\markdb#1{}
  \def\danny#1{}
\begin{document}

\title{On the hardness of unlabeled\\ multi-robot motion
  planning\thanks{\authorrefmark{1}This work has been supported in
    part by the Israel Science Foundation (grant no.\ 1102/11), by the
    German-Israeli Foundation (grant no.\ 1150-82.6/2011), and by the
    Hermann Minkowski-Minerva Center for Geometry at Tel Aviv
    University.}}

\author{\IEEEauthorblockN{Kiril Solovey\authorrefmark{1} and Dan Halperin\authorrefmark{1}}\\
  \IEEEauthorblockA{Blavatnic School of Computer Science\\
    Tel Aviv University, Israel\\
    email: \{kirilsol,danha\}@post.tau.ac.il}}

\maketitle

\begin{abstract}
  In \emph{unlabeled} multi-robot motion planning several
  interchangeable robots operate in a common workspace. The goal is to
  move the robots to a set of target positions such that each position
  will be occupied by \emph{some} robot. In this paper, we study this
  problem for the specific case of unit-square robots moving amidst
  polygonal obstacles and show that it is PSPACE-hard. We also
  consider three additional variants of this problem and show that
  they are all PSPACE-hard as well.  To the best of our knowledge,
  this is the first hardness proof for the unlabeled
  case. Furthermore, our proofs can be used to show that the
  \emph{labeled} variant (where each robot is assigned with a specific
  target position), again, for unit-square robots, is PSPACE-hard as
  well, which sets another precedence, as previous hardness results
  require the robots to be of different shapes.
\end{abstract}

\IEEEpeerreviewmaketitle

\section{Introduction} \label{sec:intro} In practical settings where
multiple robots operate in a common environment it is often the case
that the robots are identical in form and functionality and thus are
interchangeable. Specifically, in \emph{unlabeled} multi-robot motion
planning (unlabeled planning, in short) a group of identical robots
need to reach a set of target positions. As the robots are identical
we only require that in the end of the process each target position
will be occupied by \emph{some} robot. This is in contrast to the
standard \emph{labeled} (also known as \emph{fully-colored})
multi-robot motion problem, where each robot is required to reach a
\emph{specific} target position, and the robots may differ in
shape. While labeled planning has been of interest to many researchers
for the past four decades, unlabeled planning has only been recently
introduced and investigated.

\subsection{Related work}
We start with the much more intensively studied \emph{labeled} case of
multi-robot motion planning.  Schwartz and Sharir~\cite{ss-pm3} were
the first to consider the labeled problem from the geometric
point-of-view, and in particular studied the case of two discs moving
amidst polygonal obstacles and developed an algorithm with a running
time of $O(n^3)$, where $n$ is the complexity of the
workspace. Yap~\cite{c-cms84} also considered this setting and
described an algorithm of complexity $O(n^2)$. Later on, Sharir and
Sifrony~\cite{ss-cmp91} proposed an $O(n^2)$ algorithm as well,
although their algorithm deals with several additional types of
robots, besides discs.

When the number of robots is no longer a fixed constant the problem
can become computationally intractable. Specifically, Hopcroft et
al.~\cite{hss-cmpmio} showed that the problem is \pspace-hard for the
setting of rectangular robots bound to translate in a rectangular
workspace. Their proof required the rectangular robots to be of
varying dimensions.  Spirakis and Yap~\cite{sy-snp84} showed that the
problem is \np-hard for disc robots in a simple-polygon workspace; here the proof strongly relies on the fact that the discs are of
varying radii.

More recently, Hearn and Demaine~\cite{hd-psb05,hd-gpc09} improved the
result of Hopcroft et al.\ by showing that the robots can be
restricted to only two types---$2\times1$ and $1\times 2$
rectangles. Their work is more general: They introduced in this work
the \emph{nondeterministic constraint logic} (NCL) model of
computation, for which they describe several \pspace-hard problems,
and from which they derive the \pspace-hardness of a variety of
puzzle-like problems that consist of sliding game pieces. (We describe
the NCL model in detail later on.) In particular, they applied their
technique to the \emph{SOKOBAN} puzzle, where multiple ``crates'' need
to be pushed to target locations, and the \emph{Rush Hour} game, where
a parking attendant has to evacuate a car from a parking lot, by
clearing a route blocked by other cars. Other hardness results using
the NCL model have
followed~\cite{bb-rb12,bg-dyn13,hj-croll12,ikd-rec12}.

Due to the various hardness results related to the problem, the
interest of the Computational-Geometry community in the multi-robot
motion planning problem has diminished over the years and the
attention has gradually shifted to the Robotics community who started
to develop sampling-based tools for the problem. Sampling-based
algorithms\footnote{We only briefly mention this area of research, as it is
  beyond the scope of this paper.} try to capture the structure of the
configuration space of the problem (or more accurately, its division
into ``free'' and ``forbidden'' regions) by random sampling of the
space and connecting nearby configurations by ``simple'' paths, to
form a \emph{roadmap}. While such methods are incapable of determining
whether a solution does not exist, they often provide asymptotic
guarantees of completeness and optimality. Even though such techniques
can be applied as-is to the labeled planning problem~\cite{sl-upp},
many approaches that were specifically designed for the problem have
emerged~\cite{hh-hmp,shh12, ssh-fne13,so-cppmr,wc-cmpp11}.

Unlabeled multi-robot motion planning was introduced by Kloder and
Hutchinson~\cite{kh-pppi05}, who described a sampling-based algorithm
for the problem. More recently, Solovey and Halperin have developed a
sampling-based algorithm for the unlabeled problem, as well as for a
generalization termed $k$-\emph{color} planning that consists of $k$
groups of interchangeable robots~\cite{sh-kcolor14}. Krontiris et
al.~\cite{lsdlyb-similar_part_rearrange14} describe an adaptation of
this work for the problem of rearranging several objects using a
robotic manipulator. Turpin et al.~\cite{tmk-cap13} describe an
efficient and complete algorithm for unlabeled planning for disc
robots which also guarantees finding the optimal solution in terms of
the length of the longest path traversed by a robot. However, their
algorithm makes the assumption that a certain portion of the free
space, surrounding each start or target position, is
\emph{star-shaped}. More recently, Adler et
al.~\cite{abhs-unlabeled14} studied a similar setting with unlabeled
disc robots operating in a simple polygon, although under a milder
assumption requiring each pair of start or target positions to be
separated by a distance of at least $4$, where the radius of the
robots is $1$. They describe an algorithm with a running time of
$O(n\log n+ mn+m^2)$, where $n$ is the complexity of the polygon and
$m$ is the number of robots. A crucial questions that follows from
their work is whether the efficient solution of the problem is
possible due to the separation constraints or the fact that the robots
are unlabeled.

\subsection{Contribution}
In this paper we study the problem of unlabeled multi-robot motion
planning for unit-square robots moving amidst polygonal obstacles.  We
show that the decision problem, namely, to decide whether a solution
exists, is \pspace-hard. To the best of our knowledge, this is the
first hardness proof for the unlabeled case. In fact we consider four
variants of the unlabeled problem (see Section~\ref{sec:prel} for a
precise definition) and show that they are all \pspace-hard. For
instance, we show that the seemingly easier version of the problem
where only one of the robots is required to reach a certain target
position while the other robots function as movable obstacles, is
also computationally intractable. We mention that our proofs can be
used to show that the labeled variant, again, for unit-square robots,
is \pspace-hard as well, which sets another precedence, as previous
hardness results require the robots to be of different
shapes~\cite{hd-psb05,hss-cmpmio,sy-snp84}. The various hardness
results for multi-robot motion planning are summarized in
Table~\ref{tbl:contribution}. While our result in itself is negative,
we hope that it will motivate research of other variants of the
unlabeled problem which may turn out to be polynomially solvable.

This paper complements another work by the
authors~\cite{SolYuZamHal15} in which we study a slightly different
setting of the unlabeled problem and present an efficient algorithm to
tackle it. In particular we consider the problem of unlabeled motion
planning of unit-disc robots moving amid polygonal obstacles.  We show
that if two simplifying assumptions are made regarding the distances
between pairs of start and target positions and between such positions
and the obstacles, an efficient algorithm can be developed.  In
particular, our algorithm has a running time\footnote{For simplicity
  of presentation, $\log$ factors in the complexity of the algorithm
  are omitted, and hence the $\tilde{O}$ notation is used.}
$\tilde{O}(m^4+m^2n^2)$, where $m$ is the number of robots and $n$ is
the complexity of the obstacles. Furthermore, the algorithm returns a
solution whose total length (namely the total length traveled by all
the robots) is $\text{OPT}+4m$, where OPT is the optimal solution
cost. In spite of the difference in robots, we believe that the
hardness result presented in the current paper hints that such
mitigating separation (or other) assumption are essential in order to
obtain efficient algorithms as in the other paper.

The organization of this paper is as follows. In
Section~\ref{sec:prel} we describe the four variants of the unlabeled
problem that will be considered in this paper.  In
Section~\ref{sec:ncl} the NCL model of computation~\cite{hd-psb05},
which is a key ingredient in our hardness proof, is described. In
Section~\ref{sec:proof} we provide the hardness proofs.
\begin{table*}
  \footnotesize
  \begin{center}
    \begin{tabular}{|c||c||c|c|c|}
      \hline 
      Contributor & Problem & Complexity & Robots & Workspace \\ 
      \hline 
      \hline
      Hopcroft et al.~\cite{hss-cmpmio} & labeled & \pspace-hard & rectangles
                                                  & rectangle \\ 
      \hline 
      Spirakis, Yap~\cite{sy-snp84} & labeled & strongly \np-hard & discs
                                                  & simple polygon \\ 
      \hline 
      Hearn, Demaine~\cite{hd-psb05} & labeled & \pspace-complete & $1\times
                                                                    2$
                                                                    rectangles & rectangle \\ 
      \hline 
      \textbf{this paper} & unlabeled, labeled &                     \pspace-hard  & unit squares & polygonal obstacles \\ 
      \hline 
    \end{tabular} 
  \end{center}
  \caption{Hardness results related to the multi-robot problem.}
  \label{tbl:contribution}
\end{table*}

\section{Preliminaries}\label{sec:prel}
Let $r$ be a robot operating in the planar workspace $\W$. We denote
by $\C(r)$ the \emph{configuration space} of $r$, and by
$\F(r)\subset\C(r)$ the \emph{free space} of $r$---the collection of
all configurations for which the robot does not collide with
obstacles. Given $s,t\in \F(r)$, a \emph{path} for $r$ from $s$ to $t$
is a continuous function $\pi:[0,1]\rightarrow \F(r)$, such that
$\pi(0)=s, \pi(1)=t$.

We say that two robots $r,r'$ are \emph{geometrically identical} if
$\F(r) = \F(r')$ for the same workspace $\W$. Let
$R=\{r_1,\ldots,r_m\}$ be a set of $m$ geometrically identical robots,
operating in a workspace $\W$.  We use $\F$ to denote
$\mathcal{F}(r_i)$ for any $1\leq i \leq m$.
    
\begin{definition}
  A collection $C=\{c_1,\ldots,c_m\}$ of $m$ configurations is termed
  a \emph{multi-configuration}. Such a \mc is \emph{free} if
  $C\subset\F$ and for every two distinct configurations $c,c'\in C$
  it holds that $r(c)\cap r'(c')=\emptyset$, for $r,r'\in R$, where
  $r(x)$, for $x\in \C$ denotes the area covered by some robot
  $r\in R$ when placed in $x$.
\end{definition}

\begin{definition}\label{def:equivalent}
  Given two free \mcs, $C=\{c_1,\ldots,c_m\},C'=\{c'_1,\ldots,c'_m\}$,
  we say that they are equivalent ($C\equiv C'$) if there exist $m$
  paths $\Pi=\{\pi_1,\ldots,\pi_m\}$ that move the $m$ robots from $C$
  to $C'$. More formally, we demand that for every $c\in C$,
  there exists $1\leq i\leq m$ for which $\pi_i(0)=c$; for every
  $c'\in C'$ there exists $1\leq j\leq m$ for which $\pi_j(1)=c'$; for
  every $\tau\in [0,1]$ the \mc
  $\Pi(\tau)=\{\pi_1(\tau),\ldots,\pi_m(\tau)\}$ is free.
\end{definition} 
    
Given two equivalent \mcs $S,T$ we denote by
$\Pi(S,T)=\{\pi_1,\ldots,\pi_m\}$ a collection of $m$ paths that move
the robots from $S$ to $T$ by following the restrictions above.  Note
that Definition~\ref{def:equivalent} requires that every target
position will be occupied by \emph{some} robot, in contrast with the
classic definition which indicates which robot should reach where.
    
We define four decision problems that will each be shown to be
\pspace-hard below:
\begin{enumerate}
\item Given two free \mcs $S,T$, is it true that $S\equiv T$?
\item Given a free \mc $S$, and a configuration $t\in \F$, is there a
  \mc $T$ such that $t\in T$ and $S\equiv T$?
\item Given a free \mc $S$, a configuration $s\in S$, and another
  configuration $t\in \F$, is there a \mc $T$ such that $t\in T$,
  $S\equiv T$ for which there exists $\pi \in \Pi(S,T)$ such that
  $\pi(0)=s,\pi(1)=t$?
\item Given two configurations $s,t\in \F$, are there two \mcs $S,T$
  such that $s\in S,t\in T$ and $S\equiv T$?
\end{enumerate}
We will refer to these problems from now on as the \mtm, \mts, \sts,
and \dtd problems, respectively.  Note that \mts differs from \sts by
allowing any robot to reach~$t$. Although the four problems seem to be
closely related, it is not clear whether it is possible to reduce one
problem to another.

\section{Nondeterministic constraint logic}\label{sec:ncl}
In this section we review the \emph{nondeterministic constraint logic}
(NCL) model of computation, and state three decision problems that are
based on this model and are shown to be
\pspace-complete~\cite{hd-psb05}.  An NCL machine is defined by a
\emph{constraint} graph $G=(V,E)$, a \emph{weight} function
$w:E\rightarrow\Naturals$, and a \emph{minimum-flow} constraint
$c:V\rightarrow \Naturals$.

\begin{definition}
  A machine configuration is an orientation $o:E\rightarrow \{0,1\}$
  such that for every edge $(v,v')\in E$ it holds that
  $o(v,v')=1,o(v',v)=0$, or $o(v,v')=0,o(v',v)=1$. An orientation $o$
  is valid if for every $v\in V$ the sum of weights of the edges that
  are directed into $v$ is at least the minimum-flow constraint of the
  vertex. More formally,
  $$\forall v\in V: \sum_{v'\in N(v)}o(v',v)\cdot w(v',v)\geq c(v),$$
  where $N(v)$ denotes the set of neighbors of $v$ in $G$.
\end{definition}

A \textit{move} from one orientation to another consists of a reversal
of the orientation of a single edge, while maintaining the
minimum-flow constrains. Given two orientations $o,o'$ we say that
they are \emph{equivalent}, and denote this relation by $o\equiv o'$,
if $o$ can be transformed into $o'$ by a series of moves. Given these
definition, the following decision problems are defined
in~\cite{hd-psb05}:

\begin{enumerate}
\item Given two orientations $o_S,o_T$, is it true that
  $o_S\equiv o_T$?
\item Given an orientation $o_S$ and an edge $(v,v')\in E$ is there
  another orientation $o_T$ such that $o_S\equiv o_T$ and
  $o_S(\{v,v'\})\neq o_T(\{v,v'\})$, i.e., the orientation of $(v,v')$
  is reversed between the two configurations?
\item Let $(v,v'),(u,u')\in E$. Additionally, let
  $o_{(v,v')},o_{(u,u')}$ be two orientations for these specific
  edges. Are there two configurations $o_S,o_T$ such that
  $o_S\equiv o_T$ and $o_S(v,v')=o_{(v,v')}, o_T(u,u')=o_{(u,u')}$?
\end{enumerate}

These problems are termed \ctc, \cte, and \ete, respectively. We are
interested in a particular setting of the problem where the constraint
graph is planar and consists of only two types of vertices:
\begin{itemize}
\item An \AND vertex has a minimum-flow constraint of two, and has
  exactly three incident edges, where one of the edges has a weight of
  two, and each of the other two edges has a weight of one.
\item An \OR vertex has a minimum-flow constraint of two and has
  exactly three incident edges, each with a weight of two.
\end{itemize}  

\begin{figure*}
  \centering
  \begin{subfigure}[b]{0.25\textwidth}
    \includegraphics[width=\textwidth]{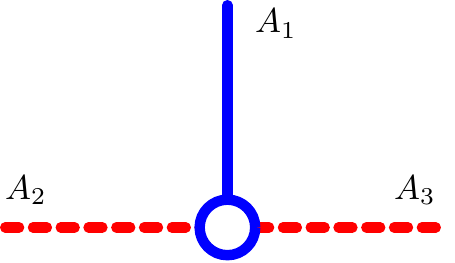}
    \caption{\AND vertex}
    \label{fig:graph_and}
  \end{subfigure}%
  ~ 
  \quad
  \begin{subfigure}[b]{0.25\textwidth}
    \includegraphics[width=\textwidth]{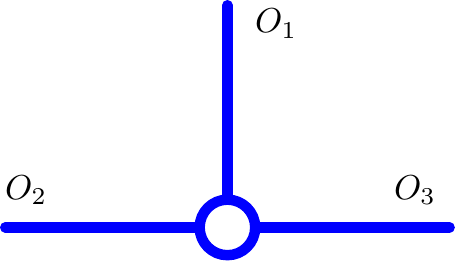}
    \caption{\OR vertex }
    \label{fig:graph_or}
  \end{subfigure}
  \caption{\AND and \OR vertices. Red (dashed) edges have a weight of
    1 and blue (solid) edges have a weight of $2$. The blue (solid)
    vertex (circle) represent an minimum flow constraint of $2$. In
    (a) $A_1$ can be directed outward if and only if $A_2$ and $A_3$
    are both directed inward. In (b) $O_1$ can be directed outward if
    and only if any of the other two is directed
    inward. }\label{fig:graph_vertices}
\end{figure*}

The following Theorem will play a central role in our analysis. Its
proof is found in~\cite[Theorem~12]{hd-psb05}.
\begin{theo}[Hearn and Demaine]\label{thm:ncl_hard}
  \emph{Full-to-edge}, \ctc, and \ete, are \pspace-complete,
  even when the constraints graph is simple, planar, and consists of
  only \AND and \OR vertices.
\end{theo}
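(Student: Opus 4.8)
The plan is to establish the two directions separately---membership in \pspace and \pspace-hardness---and then to exploit the close relationship among \ctc, \cte, and \ete so that hardness of one essentially yields the others. For membership I would first note that the set of orientations of $G$ is finite (at most $2^{|E|}$ in number) and that, since a move merely reverses the orientation of a single edge and such a reversal is itself reversible, the relation $\equiv$ is exactly the reachability relation of an \emph{undirected} reconfiguration graph $H$ whose vertices are the valid orientations and whose edges join orientations differing by one legal move. Each of the three questions is then a reachability (or reachability-with-side-constraints) question in $H$. A nondeterministic machine can answer it by storing only the current orientation---$|E|$ bits---together with a step counter bounded by $2^{|E|}$, repeatedly guessing the next edge to flip and checking that the minimum-flow constraint is preserved at its two endpoints. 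This runs in polynomial space, so by Savitch's theorem $\text{NPSPACE}=\pspace$ all three problems lie in \pspace.

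For hardness I would reduce from \textsc{Tqbf}, the truth problem for a fully quantified Boolean formula $Q_1 x_1 \cdots Q_n x_n\,\varphi(x_1,\ldots,x_n)$ with $\varphi$ in conjunctive normal form. The core of the construction is a family of gadgets encoding the formula so that two designated orientations satisfy $o_S\equiv o_T$ if and only if the formula is true. Each variable $x_i$ is represented by a sub-gadget with exactly two stable settings, one per truth value, whose current setting is broadcast through a network realizing $\varphi$: clauses become disjunctions routed through \OR vertices and the overall conjunction is collected through \AND vertices, with a designated output edge whose orientation records the value of $\varphi$. The delicate ingredient is the \emph{quantifier gadget} enforcing the nesting of the $Q_i$: an existential variable must be freely flippable, whereas a universal variable must be exercisable in \emph{both} of its settings before the enclosing quantifier may advance. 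Since every move is reversible, this alternation cannot rely on one-way progress and must instead be encoded purely through which reconfigurations are \emph{possible}; I would design the gadget so that any move sequence realizing $o_S\equiv o_T$ is forced to walk through the recursive evaluation of the formula, and I would prove a matching lower bound showing no ``shortcut'' sequence exists.

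Two further reductions finish the construction. First, although the intermediate gadgets are most naturally described using higher-degree vertices or richer constraints, I would show that \AND and \OR vertices form a \emph{universal basis}: every constraint vertex used above is simulated by a constant-size sub-network of \AND and \OR vertices, so the final graph uses only these two types with weight-$1$ and weight-$2$ edges. Second, to force \emph{planarity} I would place a \emph{crossover gadget}---a fixed sub-network, again over \AND and \OR vertices---at each edge crossing, transmitting two orientation signals across one another without letting them interfere. Finally, by reading off the designated output edge I obtain the \cte instance directly (can this edge be reversed?), and I would give the standard mutual reductions so that \ctc (by making the target orientation fully explicit) and \ete (by pairing the output edge with a second designated edge) inherit \pspace-hardness as well.

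I expect the main obstacle to be the quantifier gadget together with its correctness proof. Because $H$ is undirected, the ``completeness'' direction (a true formula admits a reconfiguration) is comparatively routine, but the ``soundness'' direction---ruling out unintended move sequences that bypass the intended recursive structure---is subtle and is where the bulk of the argument lies. A secondary difficulty is verifying that the crossover gadget remains correct after the universal-basis substitution collapses it to \AND and \OR vertices only, since planarity, the restricted vertex set, and the non-interference guarantee must all hold simultaneously in the final graph.
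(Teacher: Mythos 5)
This theorem is not proved in the paper at all: it is imported verbatim from Hearn and Demaine, with the proof explicitly deferred to \cite[Theorem~12]{hd-psb05}, so there is no in-paper argument to compare yours against. Measured against the cited source, your outline does follow the same strategy as the original proof: membership via Savitch's theorem on the exponential-size reconfiguration graph of orientations, and hardness via a reduction from quantified Boolean formulas using variable, clause, and quantifier gadgets, a simulation of all auxiliary vertex types by \AND and \OR vertices, and a crossover gadget (itself built from \AND and \OR vertices) to achieve planarity. You also correctly identify the crux --- that reversibility of moves forces the quantifier alternation to be encoded purely in which reconfigurations are possible, and that the soundness direction (no shortcut move sequences) is where the difficulty lies.

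That said, as written this is a plan rather than a proof. The existential and universal quantifier gadgets, the ``locking'' mechanism that prevents an enclosing quantifier from advancing before a universal variable has been exercised in both settings, and the crossover gadget are all asserted to exist (``I would design the gadget so that\ldots'') without being constructed, and the soundness argument you flag as the bulk of the work is not carried out. Since those constructions are precisely the content of the theorem, the proposal cannot stand on its own; it would need either the explicit gadgets with their correctness proofs, or --- as the paper does --- an appeal to \cite{hd-psb05}. If your intent was to reprove the result independently, you should also verify one detail you gloss over: that the mutual reductions among \ctc, \cte, and \ete preserve the restriction to simple planar graphs with only \AND and \OR vertices, since the theorem claims hardness under exactly that restriction for all three problems.
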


\subsection{Grid-embedded constraint graph}
In order to simplify the reduction process in the following sections,
we show that given a constraint graph $G$, as described above, it can
be transformed into a two-dimensional constraint grid graph~$H$, such
that the three NCL decision problems remain \pspace-hard on~$H$ as
well. We mention that the authors of~\cite{hd-psb05} use a similar
grid embedding, but omit the relevant details. Thus we chose to
provide a full description of this process here.

We generate a new constraint graph $H$ whose vertices are grid
vertices and edges are grid edges. Each edge of $G$ is transformed
into a noncrossing path in $H$. Such an embedding is possible due to
the fact that $G$ is simple and planar. For the purpose of the
reduction it suffices to know that such an embedding can be carried
out in polynomial time, but we mention that a linear-time algorithm by
Liu et al.\ exists~\cite{lms-grid98}.

As $G$ is planar and has a degree of three (it is exclusively made of
\AND and \OR vertices), it can be embedded on a planar grid
$H=(V_H,E_H)$ which is defined as follows. The set of vertices of $H$
is defined to be $V_H:=V \cup U$, where $U$ is an additional set of
vertices called \textit{connectors} and where each $v\in V_H$
corresponds to a point on the grid. Every edge $(v,v')\in E_H$
corresponds to an axis-parallel segment that connects the two points
$v,v'$ on the grid. Given two vertices $v,v'\in V$ for which
$(v,v')\in E$ we denote by $H(v,v')=(v,u_1,\ldots, u_{\ell},v')$ the
path in $H$ that corresponds to $(v,v')$.

We also define the weight and capacity functions $w_H,c_H$,
respectively. Each vertex $v\in V$ maintains its original capacity of
two from $G$, that is, $c_H(v):=c(v)$. Let $(v,v')\in E$ and let
$u\in U$ such that $u\in H(v,v')$. Then
$c_H(u):=w(v,v')$. Additionally, suppose that $(u,u')$ represents an
edge on the path $H(v,v')$, then we let $w_H(u,u'):=w(v,v')$.

\begin{figure*}
  \centering
  \begin{subfigure}[b]{0.3\textwidth}
    \includegraphics[width=\textwidth]{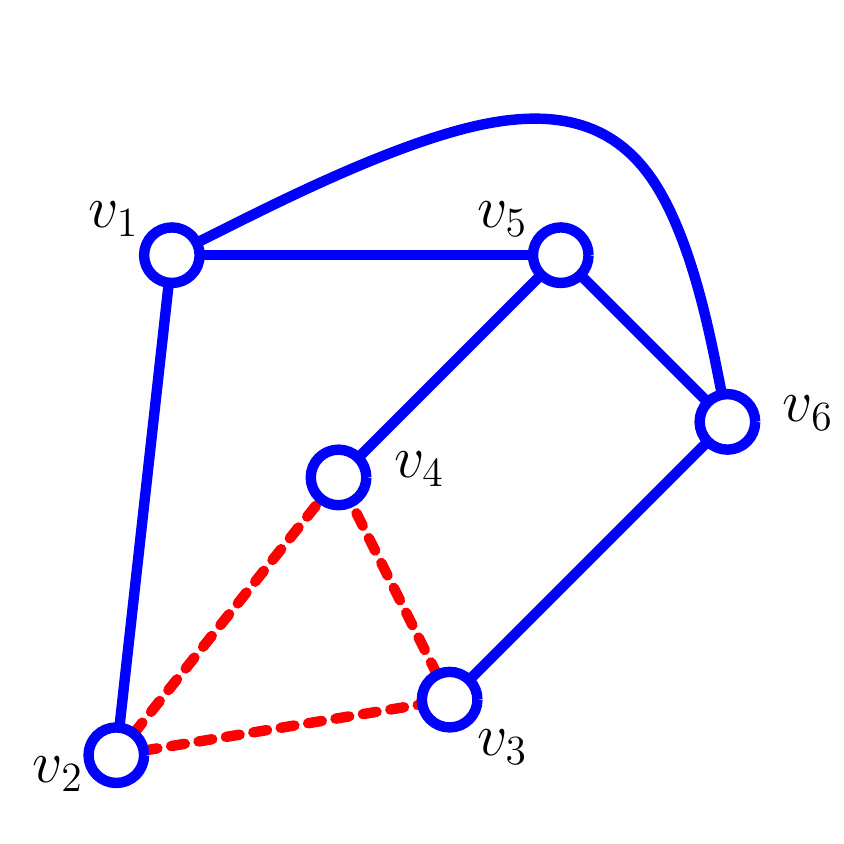}
    \caption{ }
    \label{fig:embedding0}
  \end{subfigure}%
  ~ 
  \quad
  \begin{subfigure}[b]{0.3\textwidth}
    \includegraphics[width=\textwidth]{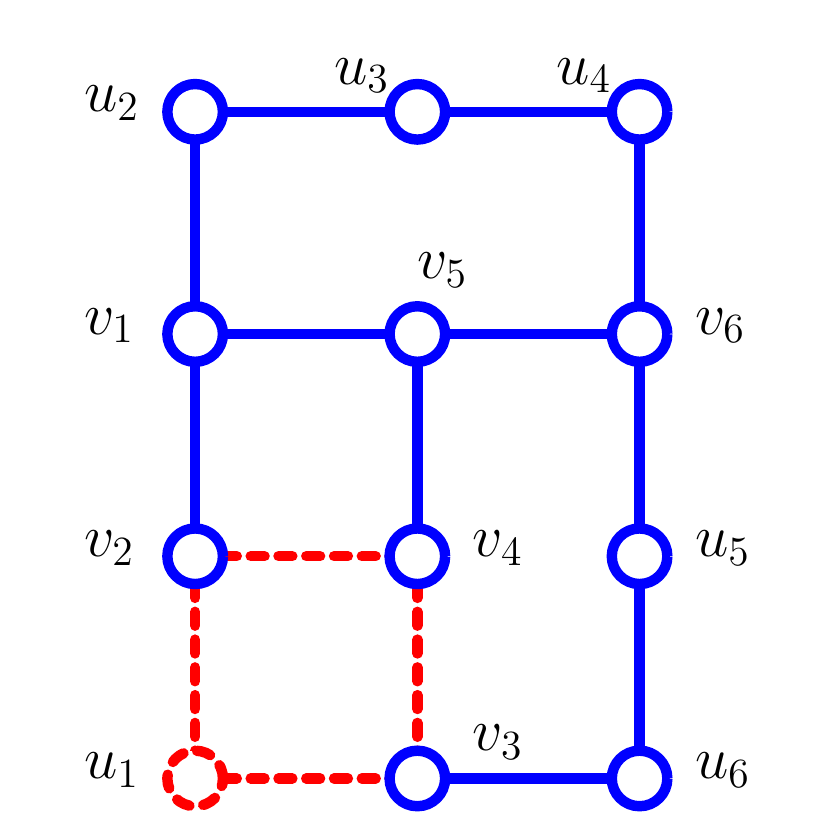}
    \caption{ }
    \label{fig:embedding1}
  \end{subfigure}
  \caption{A grid embedding of a simple planar graph with \AND and \OR
    vertices. In (a) we have the original graph $G=(V,E)$ while in (b)
    we have a grid embedding of it.  Recall that blue (solid) edges
    represent edges with weights of $2$, while red (dashed) represent
    weights of $1$. Similarly, blue circles represent vertices with
    min-flow constraints of $2$, while red represent vertices with
    min-flow constaints of $1$. Note that the embedding preserves the
    type of the vertices from $G$, e.g., $v_4$ is an \AND vertex in
    both cases. In order to make the embedding possible the
    \emph{connector} vertices $U=\{u_1,\ldots, u_6\}$ were added to
    $H$. Note that the minimum-flow costraint of $u_1$ is $1$. The
    edges of $G$ are represented as paths in $H$. For instance,
    $(v_1,v_6)\in E$ is represented by the path
    $H(v_1,v_6)=\{v_1,u_2,u_3,u_4,v_6\}$ in
    $H$. }\label{fig:grid_embedding}
\end{figure*}

\begin{lemma}
  \emph{Configuration-to-edge}, \ctc, and \ete, are \pspace-complete,
  even for the grid-embedded constraint graph $H$ that consists of
  only \AND,\OR, and connector vertices.
\end{lemma}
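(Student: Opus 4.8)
The plan is to prove the lemma by exhibiting a correspondence between valid orientations of the original constraint graph $G$ and valid orientations of its grid embedding $H$, and to argue that this correspondence respects the move relation in both directions. Since Theorem~\ref{thm:ncl_hard} already establishes that \cte, \ctc, and \ete are \pspace-complete on simple planar \AND/\OR graphs, it suffices to give a polynomial-time reduction from the problems on $G$ to the corresponding problems on $H$; membership in \pspace for $H$ is immediate since $H$ is itself an NCL machine. First I would set up the key structural fact about the embedding: each edge $(v,v')\in E$ is replaced by a path $H(v,v')=(v,u_1,\ldots,u_\ell,v')$ in which every connector $u_i$ has capacity $w(v,v')$ and every edge along the path has weight $w(v,v')$.

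The heart of the argument is to show that along such a connector path, a valid orientation forces all the internal edges to point ``the same way.'' Concretely, consider a connector $u_i$ with two incident path-edges, each of weight $w:=w(v,v')$ and with $c_H(u_i)=w$. The min-flow constraint at $u_i$ demands incoming weight at least $w$, so at least one of its two incident edges must point into $u_i$; since the total available is $2w$ and the constraint is $w$, this does not by itself force both to point in, but it does rule out the configuration where both point outward. The correct statement, which I would make precise, is that a valid orientation of the whole path is determined by a single bit---its global direction---and that this bit corresponds exactly to the orientation $o(v,v')$ of the original edge. In other words, the set of valid orientations of $H$ restricted to a connector path $H(v,v')$ is in bijection with $\{o(v,v')=0, o(v,v')=1\}$, and the two ``consistent'' path-orientations are the only ones satisfying all internal connector constraints simultaneously. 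This gives a bijection between valid machine configurations of $G$ and valid machine configurations of $H$.

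Next I would verify that this bijection carries moves to (sequences of) moves. A single edge reversal in $G$---flipping $o(v,v')$---corresponds in $H$ to reversing the entire path $H(v,v')$, which can be realized as a sequence of $\ell+1$ single edge reversals along the path. The nontrivial point to check is that this sequence can be performed while keeping every intermediate orientation valid: I would reverse the path-edges one at a time starting from the endpoint that is losing its incoming flow, so that each connector temporarily receives flow from its other incident edge during the sweep. Conversely, any single edge reversal in $H$ that keeps validity must flip an entire consistent path (since flipping a single internal path-edge violates a connector constraint), so every valid move sequence in $H$ projects to a valid move sequence in $G$. With moves preserved both ways, equivalence of orientations ($o_S\equiv o_T$) transfers between $G$ and $H$, and the edge-specific conditions in \cte\ and \ete\ translate by picking any representative edge on the corresponding path.

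The main obstacle I anticipate is the careful treatment of the connector constraints at the \emph{junctions}---that is, at the original \AND and \OR vertices where a weight-$2$ path meets (for \AND vertices) the weight-$1$ paths. I would need to confirm that the capacities and weights assigned by $c_H$ and $w_H$ reproduce the exact incoming-flow arithmetic of the original \AND and \OR gates, so that $v\in V$ satisfies its constraint in $H$ if and only if it does in $G$; this is where the preservation of vertex type (emphasized in the caption of Figure~\ref{fig:grid_embedding}) must be argued rather than merely asserted. Once the per-vertex constraint equivalence and the per-path orientation rigidity are both established, assembling them into the global bijection and the move-preservation claims is routine, and the reduction is clearly polynomial since $H$ has size polynomial in that of $G$ by the grid-embedding bound.
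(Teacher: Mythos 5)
Your high-level strategy---relate valid orientations of $G$ and of $H$, transport moves in both directions, and invoke Theorem~\ref{thm:ncl_hard}---is exactly the paper's (the paper compresses this into one sentence), and your instinct to spell out the path-rigidity and move-lifting details is sound. However, your central structural claim is false. Consider a connector path $H(v,v')=(v,u_1,\ldots,u_\ell,v')$ with all path-edge weights and all connector capacities equal to $w=w(v,v')$. The constraint at a degree-two connector $u_i$ only forbids \emph{both} of its incident path-edges from pointing away from it; it does not force global consistency. Labelling each path-edge as pointing ``toward $v$'' or ``toward $v'$'', the valid states of the path are exactly a (possibly empty) block of toward-$v'$ edges at the $v$-end followed by a (possibly empty) block of toward-$v$ edges at the $v'$-end: that is $\ell+2$ valid states, not $2$, and the extra ``mixed'' states deliver flow to \emph{neither} endpoint. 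So there is no bijection between valid orientations of $G$ and of $H$, and your assertion that ``flipping a single internal path-edge violates a connector constraint'' is wrong. Indeed, your own lifting argument sweeps the reversal along the path one edge at a time and therefore passes through precisely these mixed states; you are simultaneously using their validity and asserting their invalidity.

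The reduction survives, but via a retraction rather than a bijection, and this must be argued. Define the projection of a valid $H$-orientation by directing the $G$-edge $\{v,v'\}$ into an endpoint $x$ iff the path-edge incident to $x$ points into $x$ (at most one endpoint can satisfy this; if neither does, direct the edge arbitrarily, say into $v$). The projected orientation is valid because a mixed state only grants an endpoint flow it lacks in $H$, never the reverse, and each single move in $H$ projects to a single valid move in $G$ or to a no-op; together with your (correct) lifting sweep this yields $o_S\equiv o_T$ in $G$ iff $S\equiv T$ in $H$. A second consequence of the mixed states is that for \cte and \ete you cannot pick ``any representative edge'' on the path: the path-edge at the endpoint \emph{losing} flow can often be reversed by moving the path into a mixed state even when the $G$-edge is not reversible at all, so the reduction would produce false yes-instances. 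You must designate the path-edge incident to the endpoint into which the edge is required to be (re)directed, since that edge reverses only when the entire path does.
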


\begin{proof}
  Every orientation of $H$ can be transformed into an orientation for
  $H$, and vice versa.  Using this fact the hardness of the these
  three problems immediately follow from Theorem~\ref{thm:ncl_hard}.
\end{proof}

\section{From NCL to multi-robot motion planning}\label{sec:proof}
In this section we present the reduction from the three NCL problems,
which were described in the previous section, to our four unlabeled
multi-robot motion planning problems; we will call them unlabeled
problems for short. Specifically, we consider the case where the input
consists of a grid-embedded constraint graph~$H$, as described in
Section~\ref{sec:ncl}. Given such a graph $H$ we construct an
unlabeled scenario which corresponds to the graph and consists of
unit-square robots and polygonal obstacles. We use a grid layout, as
depicted in Figure~\ref{fig:cells}, where the cells of this grid are
of dimension $5\times 5$ and the walls separating the cells are of
thickness $1/2$. Each such cell functions as a placeholder for a
gadget which represents and emulates a specific vertex of $H$. The
gadgets are placed according to the positions of their counterpart
vertices in $H$. Note that between every two adjacent cells there is a
doorway of width $1$ so that an interaction between adjacent gadgets
can take place. We may rotate the gadgets depicted below by $90, 180$
or $270$ degrees so that gadgets that correspond to two vertices of
$H$ that share an edge will share a passage. When two vertices of $H$
share an edge, the corresponding gadgets will share a robot.  A
similar scheme was employed in~\cite{hd-psb05}, although they used
different gadgets as they were interested in showing the hardness of
slightly different motion planning problems.

\begin{figure}
  \centering
  \includegraphics[width=0.3\textwidth]{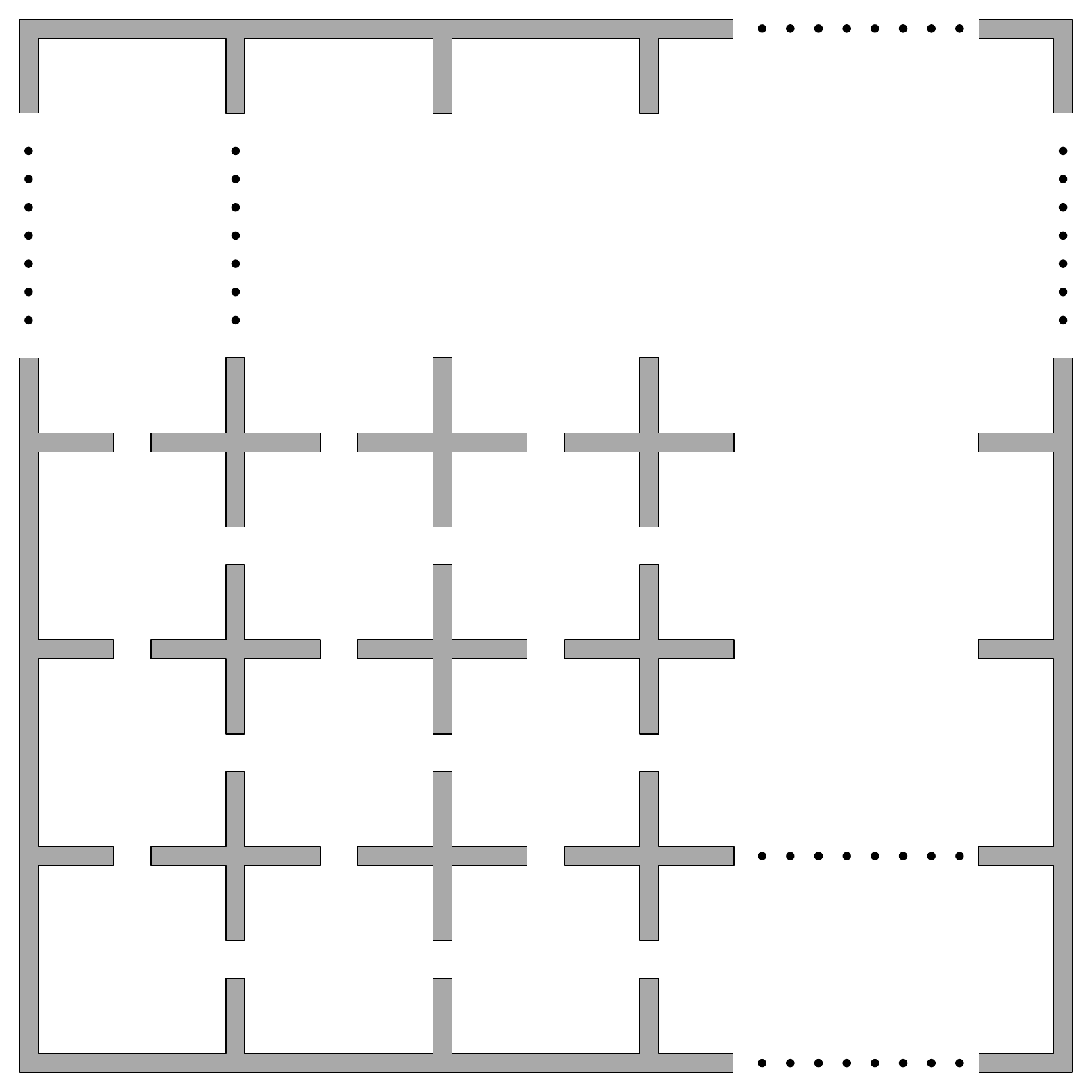}
  \caption{Placeholders for the gadgets.}\label{fig:cells}
\end{figure}

\subsection{\AND, \OR, and connector gadgets}\label{sec:gadgets}
For each vertex of $H$ we create a gadget that emulates the
functionality of this vertex in the NCL machine. For the vertices of
$U$ we create a \emph{connector} gadget, while for the vertices of $V$
we create \AND and \OR gadgets. All the gadgets fit into $5\times 5$
squares (see Figures~\ref{fig:con},\ref{fig:and},\ref{fig:or}) and
have either two or three exists through which they connect to other
gadgets. Every gadget accommodates several robots and contains
polygonal obstacles; the robots are illustrated in purple or green and
the obstacles are illustrated in gray. The white regions represent
portions of the free workspace. All the robots are placed such that
they neither overlap with the obstacles nor with each other. The \AND
gadgets also have a point obstacle, illustrated in red (its purpose
will be explained below). We mention that robots are allowed to touch,
but not penetrate, the obstacles. For an illustration of a connection
between two gadgets, see Figure~\ref{fig:edge_restricted}.

Every gadget accommodates several unit-square robots which fall into
two categories: those that never leave the gadget and those that may
penetrate the gadget or leave it to a neighbor gadget. The former are
called \emph{vertex} robots (drawn in purple), while the latter are
\emph{edge} robots (drawn in green). Edge robots are located at the
exits of the gadgets, one robot per exit. As the name suggests, edge
robots correspond to the edges of $H$ incident to the vertex. The
direction of the edge corresponds to the position of the robot, with
respect to the gadget. Specifically, an edge that is directed
\emph{inward} corresponds to an edge robot that is located at the exit
but does not penetrate the $5\times 5$ square of the gadget (for
example, robot $A_1$ in Figure~\ref{fig:and1}); an \emph{outward}
directed edge corresponds to an edge robot located at the exist such
that exactly half of it is located inside the $5\times 5$ square of
the gadget (see for example robots $A_2,A_3$ in
Figure~\ref{fig:and1}). We will refer to these two positions of the
edge robot as \emph{inside} a gadget, and \emph{outside} a gadget,
respectively. We note that when an edge robot of one gadget is located
outside, it is also inside the adjacent gadget, and vice versa. The
inverse relation between the position of the edge robots and the
orientation of the edges stems from the fact that we wish to avoid
situations where too few edges are directed into a vertex (and thus
the minimum-flow constraint is not satisfied), and situations where
too many edge robots are inside a gadget (and thus a collision
occurs). For example, in the \OR gadget in Figure~\ref{fig:or} it is
not possible for all the three edge robots to be simultaneously inside
the gadget, and this ensures that the corresponding \OR vertex in the
constraint graph will receive a sufficient amount of in-flow.

For simplicity, we only consider configurations of the robots where
the center of each robot is located at the vertices of a
$1/2\times 1/2$ grid. We refer to such configurations as
\emph{terminal}. For instance, all the robots in
Figures~\ref{fig:and},\ref{fig:con},\ref{fig:or} are placed at
terminal configurations. Additionally, the actual terminal
configurations are illustrated in Figure~\ref{fig:con2}. We also allow a robot to move
between two adjacent terminal configuration. The following three
lemmas illustrate the relation between the gadgets and the
vertices of $H$. Their proofs are straightforward and therefore
omitted.

\begin{figure*}
  \centering
  \begin{subfigure}[b]{0.35\textwidth}
    \includegraphics[width=\textwidth]{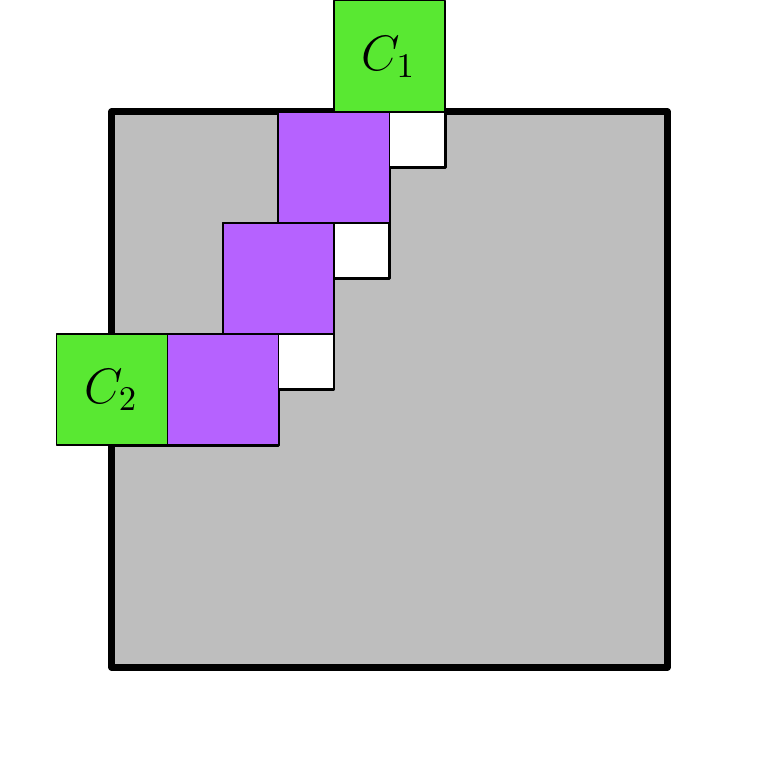}
    \caption{ }
    \label{fig:con1}
  \end{subfigure}%
  ~ 
  \begin{subfigure}[b]{0.35\textwidth}
    \includegraphics[width=\textwidth]{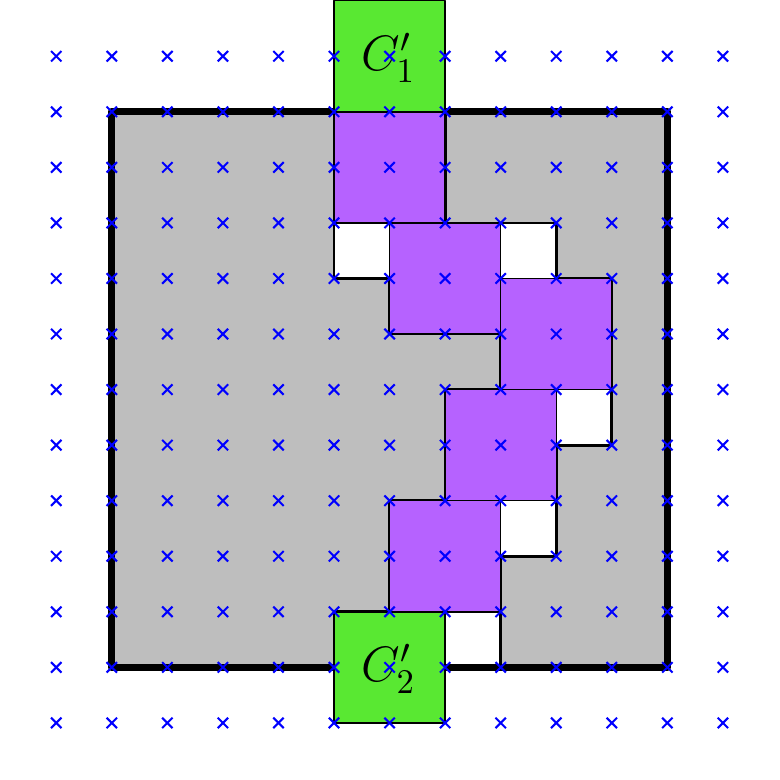}
    \caption{ }
    \label{fig:con2}
  \end{subfigure}
  \caption{Connector gadgets. The blue crosses in (b)
    represent terminal configurations, which are defined in
    Section~\ref{sec:gadgets}.}\label{fig:con}
\end{figure*}

\begin{figure*}
  \centering
  \begin{subfigure}[b]{0.35\textwidth}
    \includegraphics[width=\textwidth]{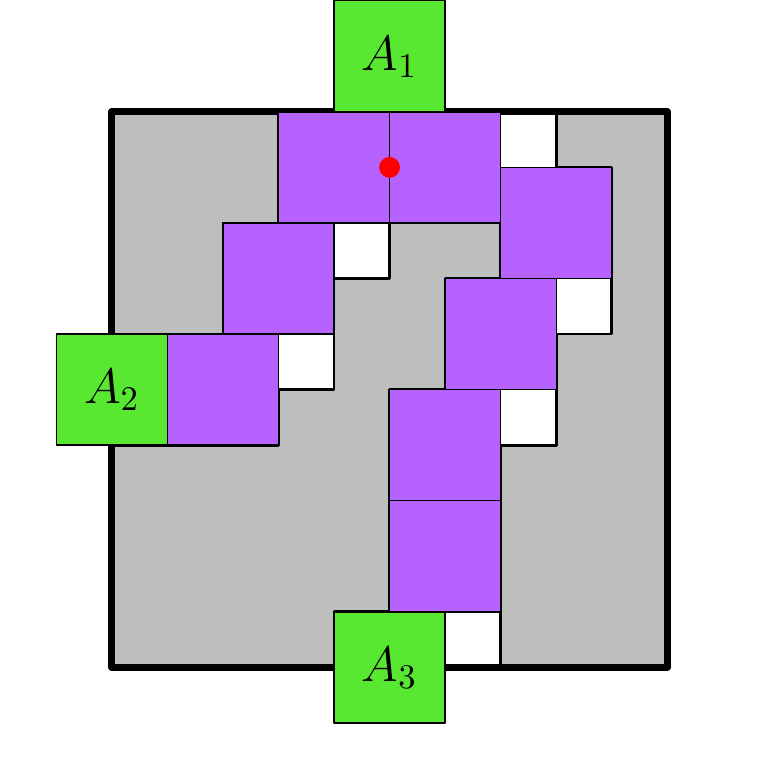}
    \caption{ }
    \label{fig:and1}
  \end{subfigure}%
  ~ 
  \begin{subfigure}[b]{0.35\textwidth}
    \includegraphics[width=\textwidth]{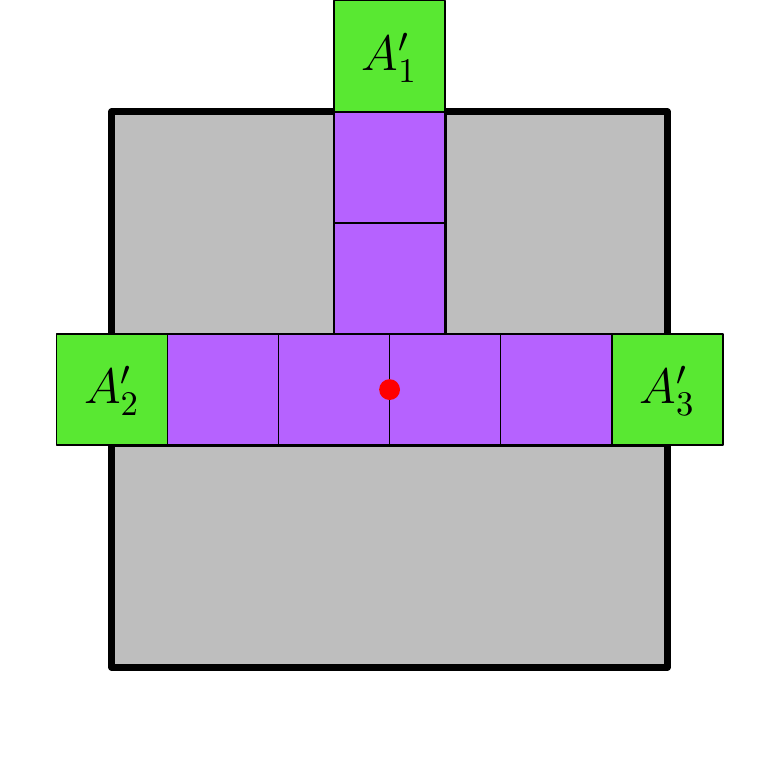}
    \caption{ }
    \label{fig:and2}
  \end{subfigure}
  \caption{\AND gadgets.}\label{fig:and}
\end{figure*}

\begin{figure*}
  \centering
  \begin{subfigure}[b]{0.35\textwidth}
    \includegraphics[width=\textwidth]{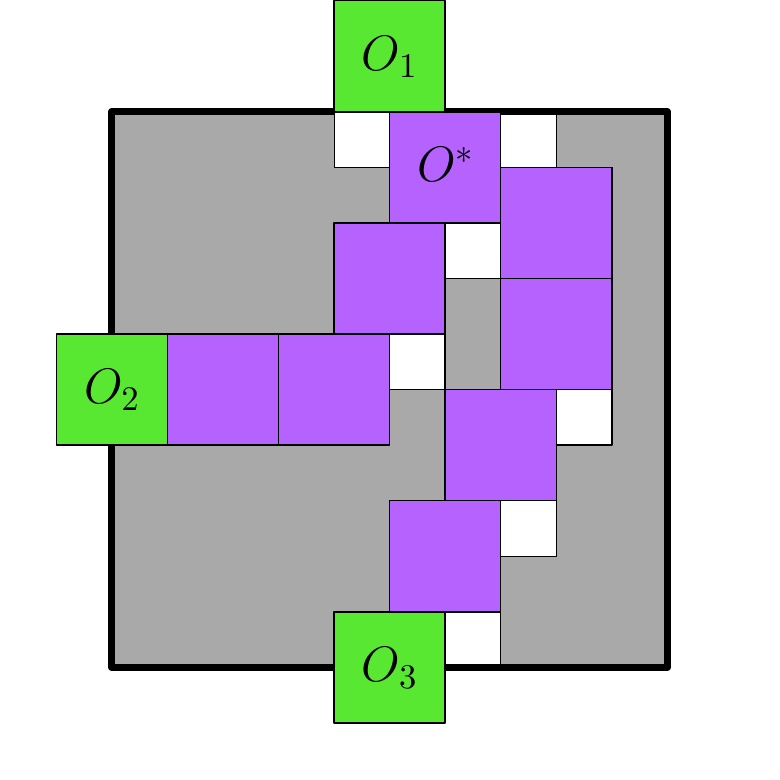}
  \end{subfigure}
  \caption{\OR gadget.}\label{fig:or}
\end{figure*}

\begin{lemma}\label{lem:con}
  Connector gadgets correspond to connector vertices in $H$, i.e., one
  of the two edge robot can be inside, if and only if the other edge
  robot is outside.
\end{lemma}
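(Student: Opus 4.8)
The plan is to prove this by a direct geometric inspection of the connector gadget (Figure~\ref{fig:con}), exactly as one expects for a lemma the authors call ``straightforward'': the whole argument reduces to measuring the free corridor that joins the gadget's two exits and checking which placements of its two edge robots are simultaneously collision-free and reachable through moves between adjacent terminal configurations.

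First I would fix notation and recall the encoding. Denote the two edge robots of the connector gadget by $C_1$ and $C_2$, one resting at each of the two exits, and recall that an edge robot is \emph{inside} the gadget exactly when it penetrates the $5\times5$ cell (half of it lies in the cell) and \emph{outside} when it sits at the exit without penetrating. By the inverse encoding of Section~\ref{sec:gadgets}, a robot being inside corresponds to the incident edge of $H$ being directed \emph{outward} from the connector vertex $u$, and being outside to its being directed \emph{inward}; since $u$ has two incident edges of equal weight and a minimum-flow constraint equal to that weight, a valid orientation of $u$ is precisely one in which \emph{not both} incident edges point outward, i.e.\ at least one points inward. Thus the target of the reduction is to show that the admissible gadget states are exactly those in which at most one of $C_1,C_2$ is inside.

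Next I would establish the geometric coupling, which is the heart of the statement. The free space shared by $C_1$ and $C_2$ is a single narrow corridor, and I would read off from the gadget's dimensions that this corridor has room for the penetrating (inside) placement of \emph{at most one} edge robot at a time: if $C_1$ is inside, the unit square it covers, together with the vertex robots of the gadget, leaves no free region into which $C_2$ could also advance by the required $1/2$ without overlap, so the resulting \mc would fail to be free. This yields the forward direction --- $C_1$ can be inside only if $C_2$ is outside --- and, by symmetry, the same with the roles exchanged. For the converse I would exhibit, whenever $C_2$ is outside, the explicit short sequence of moves between adjacent terminal configurations that slides $C_1$ (and any interposed vertex robot) into its inside placement; since the vacated region is exactly the needed $1/2$-penetration slot, such a sequence exists, and being reversible it gives the full biconditional ``$C_1$ can be inside if and only if $C_2$ is outside.''

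Finally I would close the correspondence: the admissible terminal states of the gadget are exactly those with at most one edge robot inside, which under the encoding above are exactly the orientations of $u$ with at least one incident edge directed inward --- the valid orientations of a connector vertex --- while the gadget's legal moves (sliding a robot between its inside and outside placements) correspond to the edge reversals at $u$ that keep its minimum-flow constraint satisfied. The step I expect to be the main obstacle is the quantitative one: confirming from the exact figure dimensions (cell size $5\times5$, wall thickness $1/2$, doorway width $1$, and $1/2$-grid terminal configurations) that the corridor genuinely forbids two simultaneous $1/2$-penetrations while permitting one, and carrying out the shared-robot bookkeeping so that ``inside'' for this gadget is consistently ``outside'' for its neighbor across the common doorway.
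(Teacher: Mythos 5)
Your proposal is correct and follows essentially the same route as the paper, whose entire proof of this lemma is an appeal to direct inspection of the connector-gadget figures (the authors explicitly call these proofs ``straightforward'' and simply write ``See Figure~\ref{fig:con1} and Figure~\ref{fig:con2}''). You spell out exactly what that inspection must verify --- the corridor admits at most one half-penetrating edge robot at a time, the vacated slot makes the reverse move possible, and the inside/outside encoding matches the min-flow constraint of a connector vertex --- and your reading of the inward-edge/outside-robot inversion and of $c_H(u)=w_H$ is consistent with Section~\ref{sec:ncl}, so the only part you cannot discharge without the figures is the quantitative check you already flag.
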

\begin{proof}
  See Figure~\ref{fig:con1} and Figure~\ref{fig:con2}.
\end{proof}
\begin{lemma}\label{lem:and}
  \AND gadgets correspond to \AND vertices in $H$, with
  $A_2,A_3,A'_2,A'_3$ corresponding to the $1$-weight edges, and
  $A_1,A'_1$ corresponding to $2$-weight edges, e.g., $A_1$ can move
  inside the gadget if and only if $A_2,A_3$ are both outside.
\end{lemma}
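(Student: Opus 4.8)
The plan is to read off the geometry of the \AND gadget from Figures~\ref{fig:and1} and \ref{fig:and2} and to prove the stated equivalence by translating the inward/outward edge convention into occupied/free half-squares of the $1/2\times 1/2$ terminal grid, then arguing the two directions separately. First I would fix coordinates on the $5\times 5$ cell and tabulate, for the terminal configurations, which half-squares are occupied by obstacles (gray), by the vertex robots (purple), and by each edge robot $A_1,A_2,A_3$ in its \emph{inside} versus \emph{outside} position. The key bookkeeping is the inverse convention established in Section~\ref{sec:gadgets}: an inward-directed edge corresponds to the edge robot sitting \emph{outside} the cell (not protruding), while an outward-directed edge corresponds to half the robot protruding \emph{inside}. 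Under this translation the NCL condition from Figure~\ref{fig:graph_and} --- that $A_1$ may be directed outward iff $A_2,A_3$ are both directed inward --- becomes exactly the claim that $A_1$ may be inside iff $A_2,A_3$ are both outside.

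For the \emph{if} direction ($A_2,A_3$ outside $\Rightarrow$ $A_1$ can move inside) I would exhibit an explicit finite sequence of single-step terminal moves. With $A_2$ and $A_3$ withdrawn to their outside positions, the half-squares they would otherwise occupy inside the cell are free; I would slide the vertex robots along the resulting open corridor to clear the half-squares in front of $A_1$, and then advance $A_1$ half-step by half-step into its inside position, verifying at each step that the swept half-square is free and that no two robot footprints overlap. This amounts to following the reconfiguration already suggested by the two sub-figures.

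For the \emph{only if} direction ($A_1$ inside $\Rightarrow$ $A_2,A_3$ both outside) --- which is the crux --- I would argue by a capacity-and-blocking argument rather than by tracking individual robots, since the robots are unlabeled and may permute. The idea is to identify a bottleneck region inside the cell through which any entry path for $A_1$ must pass, and to show that the half-squares in this region are occupied whenever $A_2$ is inside and, separately, whenever $A_3$ is inside; hence $A_1$ inside forces both to be vacated. The red point obstacle is exactly what pins down the free space so that $A_1$'s entry corridor cannot avoid both regions at once: without it one could imagine an \OR-like escape in which clearing only one of $A_2,A_3$ already suffices, and the point obstacle rules this out. I would formalize the impossibility with an invariant --- the number of robot-occupied half-squares in the bottleneck region --- that is preserved by each single terminal move and that every configuration with $A_1$ inside and $A_2$ or $A_3$ still inside would violate, via an area count showing the internal free space accommodates the vertex robots together with exactly one protruding weight-$2$ robot and no additional protruding weight-$1$ robot.

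The main obstacle will be this \emph{only if} direction, and specifically ruling out every sneaky rearrangement of the vertex robots that might admit $A_1$ while one of $A_2,A_3$ remains inside. Because permutations of identical robots are allowed, the heart of the argument is establishing that the designated region is a genuine bottleneck, which is precisely where the point obstacle does its work; once that is in place, the invariant and the single-move preservation are routine, and the remainder of the proof is a direct, if tedious, reading of Figures~\ref{fig:and1} and \ref{fig:and2}.
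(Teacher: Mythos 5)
The paper does not actually prove this lemma: it declares the three gadget lemmas ``straightforward and therefore omitted'' and the entire proof body is ``See Figure~\ref{fig:and1} and Figure~\ref{fig:and2}.'' Your proposal is therefore necessarily more detailed than the paper's, and the overall approach --- fix coordinates, translate the inward/outward edge convention into protruding/withdrawn positions of the edge robots, check the \emph{if} direction by an explicit motion and the \emph{only if} direction by a blocking argument --- is exactly the kind of case analysis the authors are implicitly invoking when they point at the figures. You correctly identify the inverse convention (inward edge $=$ robot outside the cell) and correctly isolate the \emph{only if} direction as the crux, and your bottleneck-plus-invariant formalization is a sound way to make the unlabeled-permutation issue rigorous; it buys an actual argument where the paper offers none. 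Two caveats. First, you attribute to the red point obstacle the job of directly enforcing the \AND semantics; in the paper its stated purpose (revealed in the proof of Lemma~\ref{lem:restricted_edge}) is to block the vertex robot $D$ from moving left, i.e., to confine vertex and edge robots to at most two terminal configurations --- this indirectly supports the \AND behaviour, but the tight packing of the vertex robots against the obstacles is what carries most of the load, so your invariant should be anchored on the vertex-robot confinement of Lemmas~\ref{lem:restricted_edge} and~\ref{lem:restricted_vertex} rather than on the point obstacle alone. Second, your invariant is preserved ``by each single terminal move,'' whereas robots move continuously and possibly simultaneously; this is harmless here only because the confinement lemmas justify discretizing to terminal configurations, and it is worth saying so explicitly since the sequentialization of simultaneous moves is dealt with separately in the proof of Theorem~\ref{thm:final}. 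Neither caveat is a gap in the logic, but both depend on geometric facts that can only be verified against the actual figures, which is the same limitation the paper's own ``proof'' has.
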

\begin{proof}
  See Figure~\ref{fig:and1} and Figure~\ref{fig:and2}.
\end{proof}
\begin{lemma}\label{lem:or}
  The \OR gadgets correspond to \OR vertices in $H$, i.e., one of the
  edge robots can move inside if and only if at least one of the other
  edge robots is outside.
\end{lemma}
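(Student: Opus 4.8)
The plan is to establish the biconditional by inspecting the single figure of the \OR gadget (Figure~\ref{fig:or}), in the same spirit as the proofs of Lemma~\ref{lem:con} and Lemma~\ref{lem:and}, but I would make both implications explicit, since the \OR case carries the only genuinely nonlocal packing constraint among the three gadget types. Throughout I would exploit the functional symmetry of the layout: the three edge robots $O_1,O_2,O_3$ play interchangeable roles (each corresponds to a weight-$2$ edge of the \OR vertex), so it suffices to argue for one of them, say $O_1$, and appeal to this symmetry for the remaining two. I would also keep in view the inverse dictionary fixed in Section~\ref{sec:gadgets}: a robot that is \emph{inside} the gadget encodes an \emph{outward}-directed edge, while a robot that is \emph{outside} encodes an \emph{inward}-directed edge. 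Under this dictionary the claim ``one edge robot can be inside iff at least one of the others is outside'' is exactly the NCL reading ``one edge may be directed outward iff at least one other is directed inward,'' which is precisely the defining behaviour of an \OR vertex with min-flow $2$ and three incident weight-$2$ edges.

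For the implication that \emph{$O_1$ can be inside only if $O_2$ or $O_3$ is outside}, I would give a static geometric (packing) argument. Because we restrict to terminal configurations and to moves between adjacent terminals, every robot center lies on the $1/2\times 1/2$ grid and each placement is combinatorially determined. The key is not a crude total-area count (the interior of the $5\times 5$ cell is far larger than three half-unit-square footprints) but a \emph{local} incompatibility built into the obstacle layout: I would show that, within the cell, any two of the three ``inside'' half-square slots are mutually compatible, yet realizing all three at once forces two footprints—or a footprint together with a vertex robot that is then pushed into the central region—to overlap. Hence the three edge robots can never be inside simultaneously, which is the contrapositive of the claim for $O_1$, and by symmetry for each edge robot.

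For the converse, \emph{if $O_2$ or $O_3$ is outside then $O_1$ can move inside}, I would exhibit an explicit sequence of elementary terminal-to-adjacent-terminal moves realizing the transition of $O_1$ from its outside to its inside position. Assuming without loss of generality that $O_3$ is outside, the half-square slot it vacates opens a corridor in the interior of the cell; I would first shift the relevant vertex robots along this corridor into the region freed by $O_3$, then slide $O_1$ inward along the cleared lane, checking that every intermediate \mc is free, i.e.\ that no robot penetrates an obstacle or overlaps another robot. Since the cell is of bounded size and all motions are confined between neighbouring grid positions, this reduces to a finite case check that is directly readable from Figure~\ref{fig:or}.

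The main obstacle I anticipate is the converse (reachability) direction rather than the packing direction: the impossibility of three simultaneous inside-placements is a one-shot geometric observation, whereas showing that $O_1$ can \emph{reach} its inside slot requires producing an actual collision-free motion that may need to reroute the internal vertex robots. The crux is therefore to confirm that the gadget is engineered with enough interior slack so that, whenever at least one edge slot is vacated, the vertex robots can be parked out of $O_1$'s way and restored afterward. Once that corridor is identified in the figure, the remaining verification is routine, which is why—like the authors—I would ultimately defer to Figure~\ref{fig:or} for the concrete placements that witness each move.
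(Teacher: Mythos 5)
Your proposal is correct and, at bottom, takes the same route as the paper: the paper's entire proof of Lemma~\ref{lem:or} is the single line ``See Figure~\ref{fig:or}'' (the authors declare these gadget lemmas straightforward and omit the details), and what you describe is exactly the case inspection of that figure, made explicit in both directions with the right inside/outside versus outward/inward dictionary. Your packing argument for the ``only if'' direction and the explicit terminal-to-terminal motion for the converse are a faithful, more rigorous elaboration of the inspection the paper leaves to the reader.
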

\begin{proof}
	See Figure~\ref{fig:or}.
\end{proof}

\subsection{Unlabeled motion planning with gadgets}
We finalize the details of our reduction and prove its correctness. We
first show that the structure of gadgets is very restrictive and
allows a limited set of movements for the robots, so that robots
cannot wander between different gadgets.

\begin{lemma}\label{lem:restricted_edge}
  Each edge robot can be in at most two distinct terminal
  configurations.
\end{lemma}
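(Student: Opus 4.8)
The plan is to argue entirely from the tight dimensions of the construction, using the fact that an edge robot is always engaged with a doorway of width exactly $1$. I would first fix coordinates so that the wall separating the edge robot's two cells is axis-parallel, occupying a horizontal strip of thickness $1/2$, with the doorway being the width-$1$ gap in this wall through which the robot passes between the two gadgets. Since the robot is a unit square and the gap has width exactly $1$, any placement whose interior meets the wall strip must be axis-aligned: a unit square rotated by $\theta$ has horizontal extent $|\cos\theta|+|\sin\theta|\ge 1$, with equality only in the axis-aligned case, so a rotated placement overlapping the wall strip cannot fit into the width-$1$ gap without penetrating the wall. Moreover, once the axis-aligned robot sits in the gap its two vertical sides are flush against the two sides of the gap, which pins its coordinate along the wall. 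Hence, while engaged with the doorway, the robot has a single translational degree of freedom: sliding perpendicular to the wall.

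Next I would enumerate the terminal configurations available along this one axis. Because a terminal configuration places the center on the $1/2\times 1/2$ grid and the wall has thickness $1/2$, the center coordinate perpendicular to the wall runs over multiples of $1/2$, and the only two values that keep the robot straddling the doorway—rather than retreating entirely into one of the two cells—are the center lying on the cell-side edge of the wall (exactly half the robot inside the cell) and the center lying on the opposite edge of the wall (the robot just failing to penetrate the cell). These are precisely the \emph{inside} and \emph{outside} positions fixed in Section~\ref{sec:gadgets}; see Figures~\ref{fig:and1} and~\ref{fig:con2}. It then remains only to exclude the terminal positions in which the robot has slid fully into one of the two adjacent cells.

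This exclusion is the crux of the argument, and I would settle it by the internal layout of the gadgets. In each gadget (Figures~\ref{fig:con},~\ref{fig:and},~\ref{fig:or}) the obstacles and vertex robots are placed so that directly behind the \emph{inside} position of every exit the free space terminates: an obstacle (or a blocking vertex robot) abuts the edge robot and prevents any further advance into the cell, and the symmetric obstacle in the neighboring gadget blocks advance in the other direction. I expect the only delicate point to be ruling out a lateral escape once the robot has partially entered a cell; but this is immediate, since at the \emph{inside} position the half of the robot spanning the wall strip still lies in the width-$1$ gap and is therefore laterally pinned, so no sideways slip into the cell interior is possible before the blocking obstacle is reached. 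Consequently the slab of free space reachable along the sliding axis spans exactly the two terminal positions identified above, and the edge robot can occupy no other terminal configuration.
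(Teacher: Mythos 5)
Your geometric reduction to a one-dimensional sliding motion is sound and matches the spirit of the paper's argument: the width-$1$ doorway pins the edge robot laterally, so everything comes down to showing that its range of travel perpendicular to the wall spans exactly the \emph{inside} and \emph{outside} positions. (The rotation argument is unnecessary, since the robots only translate, but it does no harm.) However, the step you yourself identify as the crux is exactly where your proof has a gap. You write that ``an obstacle (or a blocking vertex robot) abuts the edge robot and prevents any further advance,'' treating a robot and an obstacle as interchangeable blockers. They are not: a vertex robot that abuts the edge robot blocks it only if that vertex robot cannot itself retreat, and vertex robots do move in general---that is how the gadgets function. Nor can you appeal to Lemma~\ref{lem:restricted_vertex} (confinement of the vertex robots) to supply this, since that lemma is proved \emph{after}, and \emph{using}, the present one.

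The paper closes precisely this hole by inspection of a representative connection (Figure~\ref{fig:edge_restricted}): the edge robot $A'_3=O_2$ is wedged between the vertex robots $D$ and $E$, and each of these is itself flush against an obstacle in the relevant direction of retreat ($D$ against the red point obstacle, $E$ against a wall), so neither can give way regardless of what any other robot does. Your argument needs this additional observation---that every blocker sitting directly behind an \emph{inside} position is either an obstacle or a robot that is itself pinned by an obstacle in the direction of advance---verified for each of the finitely many gadget/exit combinations, before the conclusion follows.
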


\begin{proof}
  We show that for every possible connection of two gadgets the edge
  robot can be in at most two terminal configurations. We only
  consider here the combination of the second \AND gadget
  (Figure~\ref{fig:and2}) and the \OR gadget
  (Figure~\ref{fig:or}). The other combinations can be treated
  analogously. Specifically, we consider the case where the connection
  is made through the edge robot $A'_3=O_2$ (colored in orange), as
  illustrated in Figure~\ref{fig:edge_restricted}. Notice that robot
  $A'_3=O_2$ is stuck between the robots $D$ and $E$: $D$ is blocked
  to the left by the red point obstacle (Figure~\ref{fig:edge1}); $E$
  is blocked to the right by an obstacle
  (Figure~\ref{fig:edge2}). From this observation we conclude that
  every edge robot can either be inside or outside, and not in any
  other terminal configuration.
\end{proof}

\begin{figure*}
  \centering
  \begin{subfigure}[b]{0.45\textwidth}
    \includegraphics[width=\textwidth]{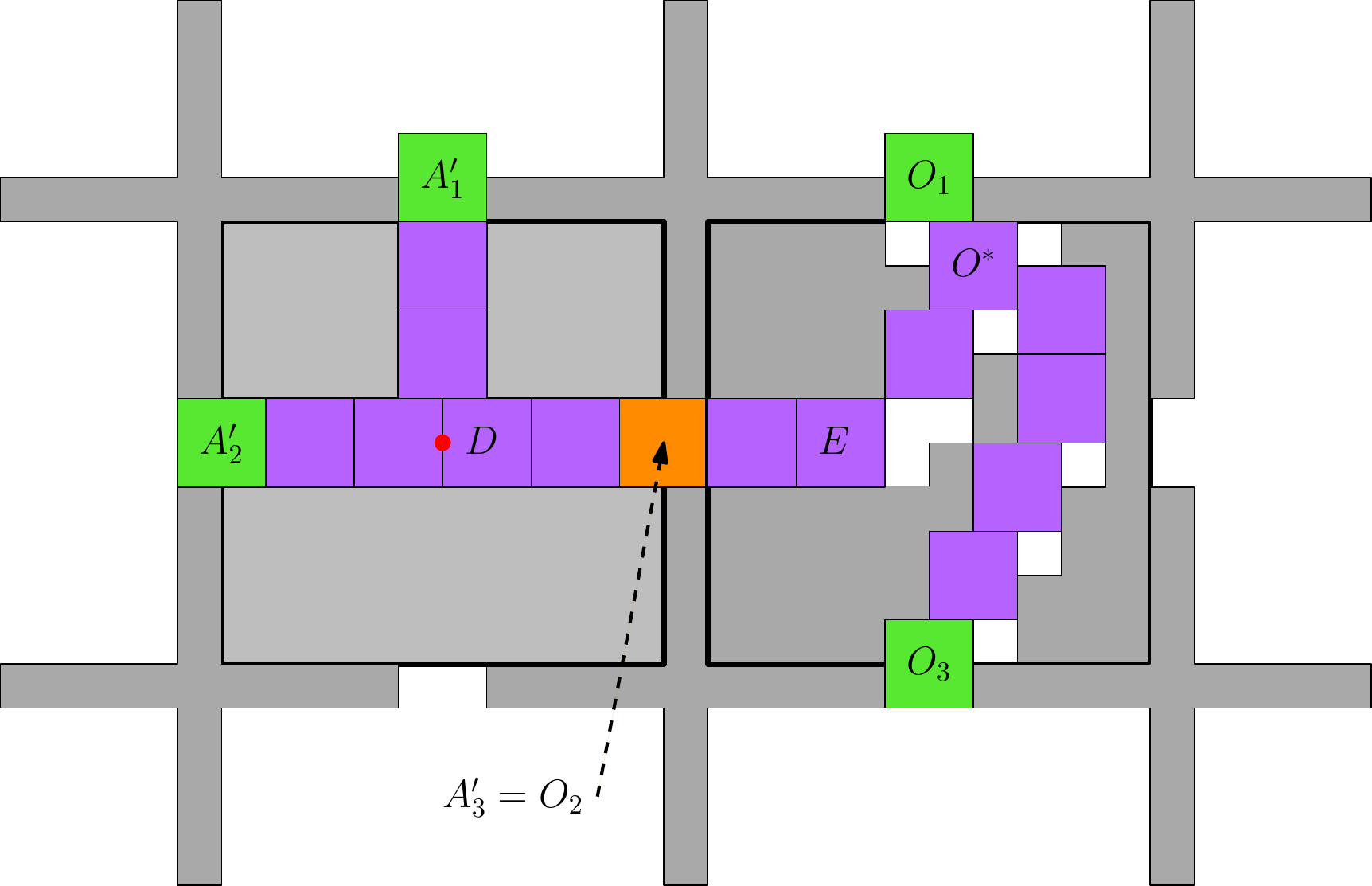}
    \caption{ }
    \label{fig:edge1}
  \end{subfigure} %
  \hspace{10pt}
  \begin{subfigure}[b]{0.45\textwidth}
    \includegraphics[width=\textwidth]{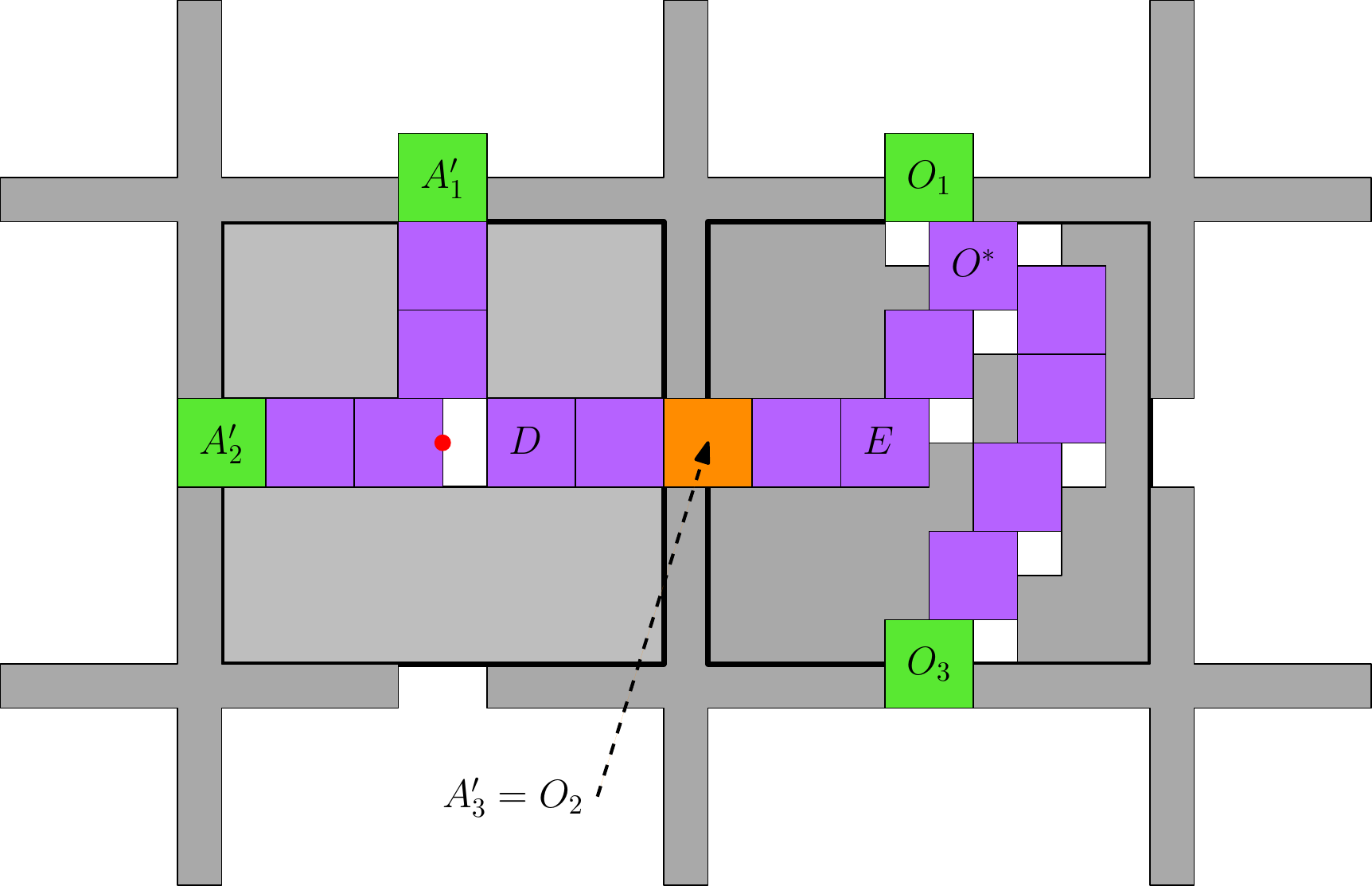}
    \caption{ }
    \label{fig:edge2}
  \end{subfigure}
  \caption{Illustration for Lemma~\ref{lem:restricted_edge} and a
    depiction of two gadgets that are connected through an edge robot. In both
    figures the same \AND gadget and \OR gadgets, which share an edge
    robot $A'_3=O_2$, are illustrated. These two gadgets simulate an
    \AND vertex and an \OR vertex which share an edge. For each edge
    robot $A'_i$ in the \AND gadget, denote by $a'_i$ the
    corresponding edge in the \AND vertex. Similarly, denote by $o_i$
    the edge corresponding to the edge robot $O_i$ in the vertex. Note
    that $a'_3$ and $o_2$ represent the same edge. In (a) the gadget
    represents a machine configuration in which $a'_1$ is directed
    inward, $a'_2$ outward, $a'_3$ outward with respect to the \AND
    vertex. As a consequence, $o_2$ is directed inward with respect to
    the \OR vertex. Additionally, $o_1$ is directed inward, while
    $o_3$ outward. In (b) a similar machine configuration is depicted,
    with the exception that $a'_3$ is now directed inward while $o_2$
    outward.}
  \label{fig:edge_restricted}
\end{figure*}

\begin{lemma}\label{lem:restricted_vertex}
  Each vertex robot can be in at most two distinct terminal
  configurations, save robot $O^*$ in the \OR gadget, which can be in
  at most three terminal configurations.
\end{lemma}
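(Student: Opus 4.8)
The plan is to mirror the proof of Lemma~\ref{lem:restricted_edge}: I would carry out a gadget-by-gadget, robot-by-robot geometric case analysis, showing that the gray obstacles of each gadget---together with the limited terminal positions available to the neighboring robots---pin every vertex robot to at most two terminal configurations, with the single exception of $O^*$ in the \OR gadget. The first step is to recall that, by definition, a vertex robot never leaves its $5\times 5$ cell, so its reachable terminal configurations are confined to the free terminal positions inside that cell (those marked in Figure~\ref{fig:con2}). The task then reduces to enumerating, for each vertex robot, the terminal positions it can actually occupy given the surrounding obstacles and the fact---already established in Lemma~\ref{lem:restricted_edge}---that each adjacent edge robot sits in only one of two positions.

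For the connector gadget (Figure~\ref{fig:con1}) I would examine each purple robot in turn and argue that it is hemmed in on enough sides by obstacles that a single translation between two adjacent terminal configurations is the only motion available, giving at most two positions. For the \AND gadgets (Figures~\ref{fig:and1},\ref{fig:and2}) the same reasoning applies; here the red point obstacle is precisely the device that forbids a third position for the vertex robots adjacent to it, so I would point to that obstacle as the blocker in each case. The crux is the \OR gadget (Figure~\ref{fig:or}): all but one of its vertex robots are blocked into two positions exactly as before, but the central robot $O^*$ has a free corridor that lets it assume three distinct terminal configurations. I would verify directly from the figure that the free region of $O^*$ admits exactly three terminal slots and no fourth, by exhibiting, for each candidate extra position, the obstacle (or occupied neighboring robot) that blocks the corresponding move.

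The main obstacle will be the \OR gadget and $O^*$ in particular. One must argue both that $O^*$ genuinely reaches three positions---this three-valuedness is what allows the gadget to emulate the three-way \OR constraint of Lemma~\ref{lem:or}---and that it reaches no more than three, while simultaneously confirming that no vertex robot of any gadget ever slips out through a width-$1$ doorway into a neighboring cell. Since the whole argument is geometric and figure-driven, the real care lies in reading off from each gadget the precise blocking obstacle for every candidate position, exactly as was done for the edge robots in Lemma~\ref{lem:restricted_edge}; once every blocker is identified, the bound of two (respectively three for $O^*$) follows immediately.
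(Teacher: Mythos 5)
Your proposal matches the paper's proof in approach and substance: both are figure-driven case analyses that use Lemma~\ref{lem:restricted_edge} to pin the vertex robots adjacent to edge robots, invoke the gadget obstacles (including the red point obstacle) for the remaining directions, and single out $O^*$ in the \OR gadget, showing it gains a half-step right when $O_3$ leaves and a half-step down when $O_2$ leaves but cannot combine the two moves without colliding with a neighboring robot. The paper likewise works out one representative connection in detail and asserts that ``a similar reasoning can be applied to all the other vertex robots,'' so your plan is at essentially the same level of rigor.
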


\begin{proof}
  First, note that every edge robot can move either horizontally or
  vertically. Let us consider for example robot $A'_3=O_2$ in
  Figure~\ref{fig:edge_restricted}, which can only move
  horizontally. Since this robot can only move between two terminal
  configurations (Lemma~\ref{lem:restricted_edge}), the two vertex
  robot that are directly to its right, cannot move further left than
  where they appear in Figure~\ref{fig:edge1}. Additionally, robot $E$
  is bounded from the right by an obstacle which does not allow it to
  move further to the right than it appears in
  Figure~\ref{fig:edge2}. A similar reasoning can be applied to all
  the other vertex robots. As to robot $O^*$, consider its position in
  Figure~\ref{fig:edge1}. It cannot go up as there is an obstacle
  there. If $O_3$ leaves the gadget it can move a half step right, and
  if $O_2$ leaves the gadget it can move a half step down. Notice
  however, that it cannot move simultaneously down and right since it
  will collide with the robot that is immediately to its right.
\end{proof}

The following lemma implies that our motion planning scenario is so
tight that each valid \mc can be interpreted in a single way as an
assignment of terminal configurations for the robots.
\begin{lemma}\label{lem:one_configuration} 
  Given a specific terminal configuration, there is at most one robot
  that can be in it.
\end{lemma}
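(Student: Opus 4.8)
The plan is to recast the statement in terms of the reachable sets that are already controlled by the two preceding lemmas. For a robot $r$, let $T(r)$ denote the set of terminal configurations that $r$ can occupy. By Lemma~\ref{lem:restricted_edge} every edge robot satisfies $|T(r)|\le 2$, and by Lemma~\ref{lem:restricted_vertex} every vertex robot satisfies $|T(r)|\le 2$, with the single exception of $O^*$, for which $|T(O^*)|\le 3$. The assertion that a given terminal configuration can host at most one robot is precisely the statement that the sets $T(r)$, ranging over all robots $r$ in the scenario, are pairwise disjoint. I would therefore reduce the lemma to proving this disjointness.

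The second step is to localize the check. Vertex robots never leave their $5\times 5$ cell, and by the blocking arguments in the proofs of Lemmas~\ref{lem:restricted_edge} and~\ref{lem:restricted_vertex} each edge robot only oscillates between its \emph{inside} and \emph{outside} positions at a single shared doorway. Consequently every terminal configuration $c$ appearing in some $T(r)$ is either interior to one gadget's cell or sits at a doorway shared by two adjacent cells. If $c$ is interior to a gadget, the only robots whose reachable sets could contain $c$ are that gadget's own vertex robots together with its edge robots in their inside position; if $c$ lies at a doorway, the only candidates are the single edge robot shared across that doorway and the vertex robots of the two incident gadgets. Thus it suffices to verify disjointness separately within each gadget type and across each doorway interface, turning an argument over the whole (arbitrarily large) scenario into a finite, local case analysis.

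The third step carries out this finite check against the gadget figures. For each of the connector, \AND, and \OR gadgets I would list the robots and, invoking the two preceding lemmas, the one, two, or three terminal configurations each can reach, and confirm that no configuration is listed twice. The two delicate cases are: (i) the robot $O^*$, which by Lemma~\ref{lem:restricted_vertex} has three reachable configurations --- its rest position, a half step right when $O_3$ leaves, and a half step down when $O_2$ leaves --- so I must check that none of these coincides with a configuration reachable by another \OR-gadget robot or by an incident edge robot; and (ii) the doorway interface, where the outside position of a shared edge robot relative to one gadget coincides with its inside position relative to the neighbor. The latter is not a conflict, because it is one and the same robot; what must be ruled out is that a vertex robot of either incident gadget could also reach that doorway cell, which is exactly what the obstacle- and neighbor-blocking established in the proof of Lemma~\ref{lem:restricted_vertex} forbids.

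I expect the main obstacle to be completeness rather than depth: the argument is ultimately a geometric verification that hinges on the precise placement of robots and obstacles in the gadget figures, so the risk is overlooking a pair of reachable regions that meet. The places where distinct robots' reachable sets come closest --- the three-configuration robot $O^*$ and the shared doorways --- are exactly where the tightness of the construction is doing the heavy lifting, and these are the cases I would check most carefully before concluding that all the sets $T(r)$ are pairwise disjoint.
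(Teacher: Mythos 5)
Your decomposition---recasting the lemma as pairwise disjointness of the reachable terminal-configuration sets $T(r)$, localizing the check to individual gadgets and shared doorways via Lemmas~\ref{lem:restricted_edge} and~\ref{lem:restricted_vertex}, and flagging $O^*$ and the doorway interfaces as the delicate cases---is exactly the argument the paper relies on; in fact the paper states this lemma \emph{without} any proof, treating the finite geometric verification against the gadget figures as evident from the tightness of the construction. Your plan is therefore correct and, if anything, more explicit than the paper; the only thing separating it from a complete proof is actually carrying out the per-gadget case check, which requires the figures you did not have.
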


We mention that a similar Lemma can be proven for the gadgets used by
Hearn and Demaine~\cite{hd-psb05} in their hardness proof of the
labeled variant of the \mtm problem. Since their proof uses two
different types of robots, their result can also be interpreted as a
\pspace-hardness proof for the $2$-color multi-robot motion planning
problem~\cite{sh-kcolor14}, which consists of two groups where the
robots within each group are identical and interchangeable.

We return to the \pspace-hardness proof of the four unlabeled problems
discussed in our paper.

\begin{thm}\label{thm:final}
  The problems \mtm, \mts, \sts, and \dtd, for unit-square robots
  translating amidst polygonal obstacles in the plane are all
  \pspace-hard.
\end{thm}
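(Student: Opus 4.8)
The plan is to reduce the three NCL problems \ctc, \cte, and \ete (which are \pspace-complete on grid-embedded constraint graphs by the lemma following Theorem~\ref{thm:ncl_hard}) to the four unlabeled problems. The backbone of all four reductions is a single correspondence between \emph{machine configurations} (orientations) of the grid-embedded graph $H$ and \emph{free terminal multi-configurations} of the robot scenario built in Section~\ref{sec:gadgets}. Given an orientation $o$ of $H$, I would place every edge robot inside its gadget when the corresponding edge is directed outward and outside when it is directed inward (the inverse relation fixed in Section~\ref{sec:gadgets}), and place each vertex robot at its canonical terminal configuration; call the resulting multi-configuration $C(o)$. The first step is to prove that $C(o)$ is free if and only if $o$ is valid. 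This is where Lemmas~\ref{lem:con}--\ref{lem:or} do the work: by design, an invalid orientation forces too many edges to point outward at some vertex, hence too many edge robots inside that gadget, which by the geometry of the gadget is exactly a collision; conversely, a valid orientation leaves enough room for all robots to coexist.

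The crux of the reduction is the equivalence statement: for any two valid orientations $o,o'$, it holds that $o\equiv o'$ in the NCL sense if and only if $C(o)\equiv C(o')$ as multi-configurations. The forward direction is the easier one---I would simulate each NCL move (a single edge reversal between valid orientations) by sliding the corresponding edge robot between its inside and outside terminal configurations, using the vertex robots of the two incident gadgets as the ``mechanism'' that is free to shuffle exactly when the min-flow constraint permits the reversal. The reverse direction is the main obstacle and relies critically on the tightness Lemmas~\ref{lem:restricted_edge}, \ref{lem:restricted_vertex}, and~\ref{lem:one_configuration}: I must argue that any continuous collision-free motion of the $m$ robots from $C(o)$ to $C(o')$, even one in which many robots move simultaneously, can be serialized into a sequence of elementary moves, each sliding a single robot between two adjacent terminal configurations. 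Because each edge robot has only two reachable terminal configurations, the inside/outside pattern of the edge robots at any instant reads off an orientation of $H$, and this orientation changes only when some edge robot crosses between inside and outside---precisely an edge reversal. Since every intermediate multi-configuration is free, every intermediate orientation is valid, so the induced sequence of orientations is a legal sequence of NCL moves; the only additional care needed is to absorb the purely internal vertex-robot moves, in particular the third terminal configuration of $O^*$ permitted by Lemma~\ref{lem:restricted_vertex}, which leave the orientation unchanged.

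With the correspondence in hand, each of the four problems follows by choosing the right instance. For \mtm I would reduce from \ctc: given orientations $o_S,o_T$, set $S=C(o_S)$ and $T=C(o_T)$, so that $S\equiv T$ iff $o_S\equiv o_T$. For \mts I would reduce from \cte: given $o_S$ and a target edge $(v,v')$, set $S=C(o_S)$ and let $t$ be the terminal configuration occupied by the edge robot of $(v,v')$ in the \emph{reversed} orientation; by Lemma~\ref{lem:one_configuration} no robot other than this edge robot can ever occupy $t$, so a free $T\equiv S$ with $t\in T$ exists iff some equivalent orientation reverses $(v,v')$. For \sts I would again reduce from \cte, additionally setting $s$ to be the current terminal configuration of that same edge robot in $o_S$; the path constraint $\pi(0)=s,\pi(1)=t$ is then automatically realized by the motion that reverses the edge, again because $t$ is reachable only by this robot. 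Finally, for \dtd I would reduce from \ete: the two specified edges $(v,v')$ and $(u,u')$ with prescribed orientations are encoded by letting $s$ be the terminal configuration of the first edge robot consistent with $o_{(v,v')}$ and $t$ that of the second consistent with $o_{(u,u')}$, so that suitable $S\ni s$ and $T\ni t$ with $S\equiv T$ exist iff the \ete instance is a YES instance. All four constructions are polynomial in the size of $H$, which completes the \pspace-hardness proofs.
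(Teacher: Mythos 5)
Your proposal follows essentially the same route as the paper: the same gadget-based correspondence between orientations of the grid-embedded graph $H$ and free terminal multi-configurations (with the same inside/outside convention for edge robots), the same serialization argument for the hard direction resting on Lemmas~\ref{lem:restricted_edge}--\ref{lem:one_configuration}, and the same pairing of \ctc with \mtm, \cte with \mts and \sts, and \ete with \dtd. The only difference is organizational---you isolate the equivalence $o\equiv o'$ iff $C(o)\equiv C(o')$ as a single explicit lemma rather than arguing it inline per problem---so the proposal is correct and matches the paper's proof.
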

\begin{proof}
  We describe reductions from the three NCL-model decision problems to
  our four unlabeled problem. Specifically we show the following
  reductions:
  \begin{itemize}
  \item The \ctc problem, which is concerned with checking whether one
    NCL machine orientation can be transformed to another, is reduced
    to the \mtm unlabeled problem, which is concerned with deciding
    whether a collection of robots can be moved between two \mcs.
  \item The \cte problem, which is concerned with the existence of an
    orientation that can be reached from a given orientation where a
    direction of a specific edge is flipped, is reduced to the
    unlabeled \mts and \sts problems that are concerned with moving an
    arbitrary or a specific robot (respectively) to a given
    destination configuration, when a starting \mc is specified for
    all the robots.
  \item The \ete problem, which is concerned with the existence of two
    orientations, where one can be transformed to another, such that
    the two orientations have an opposite direction for a given edge,
    is reduced to the unlabeled \dtd problem which is concerned with
    the existence of two \mcs that are equivalent, where each of the
    two \mcs contains a specific configuration that is given as input
    (where the configuration is not assigned to a specific robot).
\end{itemize}
We use the same reduction for all three cases. Only the analysis
slightly differs. Given a grid-embedded constraint graph~$H$ we
generate a scenario for the unlabeled problem as we described above,
by placing gadgets corresponding to vertices and connecting gadget
according to the connections in~$H$.

We first note that given an orientation for $H$ it can be transformed
into a valid \mc consisting only of terminal configurations, where the
directions of edges in $H$ induce configurations for the respective
edge robots, and these in turn induce configurations for the vertex
robots. In the other direction, given a valid \mc for the robots that
consists of only terminal configurations a valid orientation for $H$
can be easily generated, by considering only the positions of edge
robots. Note that by Lemma~\ref{lem:restricted_edge} every edge robot
can be in at most two terminal configurations that represent two
opposite directions of the same edge of $H$.

We first prove the hardness of the \mtm problem by a reduction from
the $\ctc$ problem. Let $o_S,o_T$ be two orientations of $H$, and
denote by $S,T$ the two free \mcs induced by them. It is clear that if
$o_S\equiv o_T$ then also $S\equiv T$. Now, suppose that $S\equiv T$.
Notice that in order to show that this implies that $o_S\equiv o_T$ we
need to prove the existence of a solution $\Pi(S,T)$ where no two edge
robots move at a given time. More accurately, we need to show that
there exists a solution in which every edge robot is in transit
between two terminal configurations, only when all the other edge
robots are stationary in terminal configurations. We consider for
example the \AND gadget (Figure~\ref{fig:and1}) and show that each
simultaneous movement of edge robots, where several edge robots are
located simultaneously at non-terminal configuration, can be carried
out in a sequential manner as well. We treat the various combinations
of robots moving simultaneously in and out of the gadget.  If both
$A_2$ and $A_3$ move inside then $A_1$ must be out, and therefore the
former two robots do not depend on each other in order to make their
move. Now suppose that $A_2$ and $A_3$ simultaneously move
outside. This means that each of the two gadgets, to which $A_2$ and
$A_3$ are entering, already moved other edge robots, and vertex
robots, so that the entrance of $A_2,A_3$ will be possible. Thus the
fact that $A_2$ can leave the gadget does not depend on the fact that
$A_3$ leaves the gadget, and vice versa. Therefore, we can move
$A_2,A_3$ in a sequential manner.  Therefore, a solution $\Pi(S,T)$ as
required always exists and in the case that $S\equiv T$ it also
follows that $o_S\equiv o_T$.
	
We now proceed to prove the hardness of the \mts problem by a
reduction from the \cte problem. Recall that \cte consists of an
orientation $o_S$ and edge $e\in E_H$. The question is whether there
exists another orientation $o_T$ such that $o_S\equiv o_T$ and
$o_S(e)\neq o_T(e)$, i.e., the direction of $e$ in the two
configurations is reversed. Denote by $S$ the \mc induced by $o_S$,
and by $s\in S$ the terminal configuration that corresponds to the
edge $e$ in the direction $o_S(e)$. We now ask whether there exists a
\mc $T$ such that $S\equiv T$ such that there exists $t\in T$ that
corresponds to $e$ in the opposite direction.  Now that we have
defined the components of our \mts problem it is clear that if there
exists an orientation $o_T$ as required, then its induced \mc
satisfies the conditions of \mts. The opposite direction follows
similarly to the \mtm proof above.
	
The difference between \mts and \sts is that in the latter we require
that a specific robot will move to a specific target configuration.
Note that our reduction for \mts holds here as well, since a selection
of a specific edge of $H$ induces the selection of a specific robot
that has to move between two configurations (see
Lemma~\ref{lem:one_configuration}). The hardness of the \dtd problem
by a reduction from the \ete problem can be proved in a manner similar
to the previous three cases.
\end{proof}

We mention that using Lemma~\ref{lem:one_configuration}, our hardness
proof can also be applied to prove the hardness of the labeled variant
of \mtm. In this case, each robot $r_i$ is assigned with specific
start and target configurations $s_i$ and $t_i$, respectively, and the
goal is to move each $r_i$ from $s_i$ to $t_i$ while avoiding
collisions with robots and obstacles. Specifically, we have the
following result.

\begin{thm}
  Labeled multi-robot motion planning for unit-square robots moving
  amidst polygonal obstacles is \pspace-hard.
\end{thm}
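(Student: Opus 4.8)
The plan is to reuse the reduction from the \ctc problem that establishes the hardness of the \mtm problem in Theorem~\ref{thm:final}, and to add a labeling that is forced by the geometry of the gadgets. Given a grid-embedded constraint graph~$H$ together with two orientations $o_S, o_T$, I would construct exactly the same gadget scenario as before, inducing the two free \mcs $S$ and $T$. For the labeled instance I would then assign to each robot its start configuration from $S$ and its target configuration from $T$. The crucial point is that this assignment is well-defined: by Lemma~\ref{lem:one_configuration}, every terminal configuration can be occupied by at most one robot, so each element of $S$ (respectively $T$) is associated with a unique robot. Combined with Lemmas~\ref{lem:restricted_edge} and~\ref{lem:restricted_vertex}, each robot has its own private set of at most two (at most three for $O^*$) feasible terminal configurations, and these private sets are pairwise disjoint.

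The correctness argument then splits into two directions. For the forward direction, if $o_S\equiv o_T$ I would invoke the same sequential motion constructed in the proof of Theorem~\ref{thm:final}, in which each edge robot travels between its two terminal configurations while all other edge robots remain stationary. Since every robot is confined to its own disjoint feasible set, this motion moves the unique robot that starts at $s_i$ to the unique target $t_i$ it is capable of reaching; hence it is a valid solution to the labeled instance. For the reverse direction, a labeled solution is in particular an unlabeled solution, so $S\equiv T$, which by the \mtm argument yields $o_S\equiv o_T$.

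I expect the main point to verify---rather than a genuine obstacle---to be the claim that the unlabeled and labeled problems coincide on these instances. This reduces entirely to Lemma~\ref{lem:one_configuration}: because the private configuration sets of the robots are pairwise disjoint and no robot can ever reach a configuration belonging to another, the robots can never swap roles, and the matching between robots and their targets is determined by the geometry alone. Consequently the canonical labeling is consistent, any unlabeled motion automatically respects it, and \pspace-hardness transfers directly from the \mtm problem to its labeled counterpart.
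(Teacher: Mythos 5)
Your proposal is correct and matches the paper's own argument: the paper likewise reuses the \ctc-to-\mtm reduction and invokes Lemma~\ref{lem:one_configuration} to argue that the target \mc induces a unique, geometry-forced assignment of targets to robots, so the labeled and unlabeled instances coincide. Your additional appeal to Lemmas~\ref{lem:restricted_edge} and~\ref{lem:restricted_vertex} to make the ``disjoint private configuration sets'' point explicit is a slightly fuller write-up of the same idea.
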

\begin{proof}
  We follow the same reasoning as for the \ctc to \mtm reduction
  (Theorem~\ref{thm:final}) and add that due to
  Lemma~\ref{lem:one_configuration} given a placement of the robots
  induced by the starting \mc $S$, the targets \mc can be viewed as an
  assignment of a specific target for every robot. Specifically, given
  an initial assignment $s_i$ for robot $r_i$, there at most one
  configuration in $T$ to which it can move (see
  Lemma~\ref{lem:one_configuration}).
\end{proof}

\section{Concluding remarks}
In this paper we studied the problem of motion planning of  multiple unlabeled
unit-square robots in an environment cluttered with polygonal
obstacles. We proved that four variants of this problem are \pspace-hard.
While our result in itself is negative,
we hope that it will motivate research of other variants of the
unlabeled problem which may turn out to be polynomially solvable.

\bibliographystyle{plainnat}
\bibliography{bibilography.bib} 

\end{document}